\newsavebox{\measurebox}
\pgfplotsset{compat=newest}
\definecolor{mygreen}{RGB}{69,182,73}
\definecolor{color_SH_E}{rgb}{0,0,0}
\definecolor{color_SH_D}{rgb}{1,0,0}
\definecolor{color_LSC}{rgb}{0.6552,0.3793,0.2414}
\definecolor{color_SEEDS}{rgb}{1,0.8276,0}
\definecolor{color_ERS}{rgb}{0,0.5172,0.9655}
\definecolor{color_SLIC}{rgb}{1,0.1034,0.7241}
\definecolor{color_FH}{RGB}{69,182,73}
\definecolor{light_red}{RGB}{253,232,233}
\definecolor{light_green}{RGB}{233,248,241}
\def\etal{\textit{et al.} }
\theoremstyle{plain} \newtheorem{theorem}{Theorem}
\theoremstyle{plain} \newtheorem{lemma}{Lemma}
\begin{document}
\title{Superpixel Hierarchy}

\author{
        Xing~Wei,
        Qingxiong~Yang,~\IEEEmembership{Member,~IEEE,}
        Yihong~Gong,~\IEEEmembership{Member,~IEEE,}
        Ming-Hsuan~Yang,~\IEEEmembership{Senior Member,~IEEE,}
        Narendra~Ahuja,~\IEEEmembership{Fellow,~IEEE,}\\
{\small\url{http://www.cs.cityu.edu.hk/~qiyang/publications/SH/}}
}

\IEEEtitleabstractindextext{%
\begin{abstract}



Superpixel segmentation is becoming ubiquitous in computer vision.
In practice, an object can either be represented by a number of segments in finer levels of detail or included in a surrounding region at coarser levels of detail, and thus a superpixel segmentation hierarchy is useful for applications that require different levels of image segmentation detail depending on the particular image objects segmented.
Unfortunately, there is no method that can generate all scales of superpixels accurately in real-time.
As a result, a simple yet effective algorithm named Super Hierarchy (SH) is proposed in this paper.
It is as accurate as the state-of-the-art but 1-2 orders of magnitude faster\footnote{Hundreds speedup can be obtained for multi-scale superpixels.}.
The proposed method can be directly integrated with recent efficient edge detectors like the structured forest edges \cite{DollarICCV13edges} to significantly outperforms the state-of-the-art in terms of segmentation accuracy. Quantitative and qualitative evaluation on a number of computer vision applications was conducted, demonstrating that the proposed method is the top performer.

\end{abstract}

\begin{IEEEkeywords}
Superpixel, segmentation, clustering, Bor\r{u}vka's algorithm.
\end{IEEEkeywords}
}

\maketitle

\IEEEdisplaynontitleabstractindextext

%
\IEEEpeerreviewmaketitle

\ifCLASSOPTIONcompsoc
\IEEEraisesectionheading{\section{Introduction}\label{sec:introduction}}
\else
\section{Introduction}
\label{sec:introduction}
\fi

%
%
%
%

\begin{table*}[!t] \footnotesize
\centering
\def\arraystretch{1.5}
\renewcommand{\tabcolsep}{2.5 pt}

\scalebox{0.9}{%
\begin{tabular}{l|l|c|cccccc}
\specialrule{.1em}{.1em}{.1em}
\hline
    & \multicolumn{2}{c|}{} & FH \cite{felzenszwalb2004efficient} & SLIC \cite{achanta2012slic} & ERS \cite{liu2011entropy} & SEEDS \cite{van2012seeds} & LSC \cite{LiC15} & Our SH\\
 \hline
 \multirow{4}{3cm}{Property 1: \\Segmentation accuracy} & \multicolumn{2}{l|}{Achievable segmentation accuracy (average on BSDS500~\cite{pami-11-malik})}    & 93.5\% & 94.1\% & \color{blue}{94.9\%} & 94.7\% & \textcolor[rgb]{0.2,0.6,0.2}{95.0}\% & \color{red}{95.1\%} \\

                                                        & \multicolumn{2}{l|}{Under-segmentation error (average on BSDS500)}            & 12.6\% & 11.5\% & \color{blue}{10.1}\% & 10.4\% & \textcolor[rgb]{0.2,0.6,0.2}{10.0\%} & \color{red}{9.7\%} \\
                                                        &  \multicolumn{2}{l|}{Boundary recall (average on BSDS500)} & \color{blue}{78.0\%} & 67.2\% & 75.5\% & 72.9\% & \textcolor[rgb]{0.2,0.6,0.2}{78.7\%} & \color{red}{80.8\%} \\
                                                        & \multicolumn{2}{l|}{Semantic segmentation accuracy
                                                        (using \cite{gould2008multi} on MSRC \cite{shotton2006textonboost})} & 63.8\% & 65.4\% & \color{blue}{65.5\%} & \textcolor[rgb]{0.2,0.6,0.2}{65.8\%} & 65.3\% & \color{red}{66.6\%} \\
 \hline
 \multirow{2}{3cm}{Property 2: \\Segmentation speed}    & \multicolumn{2}{l|}{$321 \times 481$ image (average on BSDS500)}   & 328 ms \parnote{Reported time includes parameter search} & \color{blue}{108 ms} & 689 ms & \textcolor[rgb]{0.2,0.6,0.2}{52 ms} & 302 ms & \color{red}{31 ms} \\
                                                                           & \multicolumn{2}{l|}{Complexity of obtaining $n$ scales of superpixels} & $O(n)$  & $O(n)$  & $O(n)$  & $O(n)$ & $O(n)$ & \color{red}{$O(1)$} \\
 \hline
 \multirow{2}{4cm}{Property 3: \\Hierarchical segmentation} & Saliency detection & run-time of superpixels \parnote{Using 5 scales of superpixels} & 1810 ms\textsuperscript{i}  & \color{blue}{554 ms} & 4160 ms & \textcolor[rgb]{0.2,0.6,0.2}{282 ms} & 1720 ms & \color{red}{35 ms} \\
                                                              & (using \cite{qin2015saliency} on PASCAL-S \parnote{Average size: $500 \times 356$} \cite{li2014secrets}) & mean absolute error\textsuperscript{ii} & \color{blue}{0.187} & 0.190 & 0.189 & \textcolor[rgb]{0.2,0.6,0.2}{0.186} & 0.190 & \color{red}{0.181} \\
 \hline
 Property 4: Maintain topology                           & \multicolumn{2}{l|}{Stereo matching error (using \cite{yang2012non} on Middlebury \cite{middlebury})} & \textcolor[rgb]{0.2,0.6,0.2}{8.10\%} & - & \color{blue}{8.69\%} & - & - & \color{red}{7.63\%} \\
 \hline
 \specialrule{.1em}{.1em}{.1em}
\end{tabular}}
\parnotes

\caption{\textbf{Super Hierarchy compared to state-of-the-art superpixel algorithms.}
\emph{Property 1}: Segmentation accuracy is measured according to three standard metrics: achievable segmentation accuracy, under-segmentation error and boundary recall on the BSDS500~\cite{pami-11-malik} and the segmentation accuracy on the MSRC-21~\cite{shotton2006textonboost} using the method proposed in \cite{gould2008multi}.
\emph{Property 2}: We report the average runtime required to segment images on an Intel i7 3.4 GHz CPU (using single core without SIMD instructions).
The theoretical complexity of obtaining multi-scale superpixels of each method is also provided.
\emph{Property 3}: The advantage of multi-scale segmentation is evidenced by saliency detection application \cite{qin2015saliency}.
\emph{Property 4}: We demonstrate the usefulness of tree structures provided by FH, ERS, and SH with the non-local cost aggregation algorithm \cite{yang2012non} for stereo matching and numerically evaluating them on the Middlebury benchmark~\cite{middlebury}. The top three algorithms are highlighted in \textcolor{red}{red}, \textcolor{mygreen}{green} and \textcolor{blue}{blue} respectively.
}
\label{tab:property}
\end{table*}

\IEEEPARstart{S}{uperpixel} segmentation is becoming ubiquitous in computer vision.
Superpixels are perception meaningful groupings of pixels and serve as primitives for further computation. Superpixels are key building blocks of many algorithms as they significantly reduce the number of image primitives compared to pixels.
This paper aims at developing computationally efficient approaches to superpixel segmentation that can be used in a wide range of computer vision tasks.
In order to achieve such versatile utility, a superpixel method should have the following properties:
\begin{asparaitem}

\item \label{prop:accurate} \textbf{Boundary adherent}: is a basic requirement of superpixels, each superpixel should only overlap with one object.
\item \label{prop:efficient} \textbf{Computational efficiency}: is crucial for superpixel segmentation as it is typically served as a pre-processing step.
%
%
The computational complexity should be independent of the number of superpixels and linear/sublinear in the image size.
\item \label{prop:hierarchy} \textbf{Hierarchal segmentation}: is considered to be close to the human visual system.
    Many algorithms can benefit from multi-resolution representations of images and
    superpixels with a natural hierarchy can be applied to more vision tasks.
\item \label{prop:topology} \textbf{Topology preserving}: simplifies the usage of the extracted superpixels.
To be a good substitute of the image pixels, superpixels should conform to a simple topology otherwise the neighborhood information cannot be maintained.
\end{asparaitem}

While the past few years have seen considerable progress in superpixel segmentation \cite{felzenszwalb2004efficient,moore2008superpixel,achanta2012slic,liu2011entropy,van2012seeds,LiC15}, the state-of-the-art methods possess only one to two of these properties which limit their utility for many vision tasks.
For instance, Liu \etal propose a graph-based method \cite{liu2011entropy} that has good segmentation accuracy.
However, it is computationally prohibitive for real-time applications.
The SEEDS method \cite{van2012seeds} achieves a compromise between accuracy and efficiency but its run-time depends on the number of superpixels.
One class of approaches \cite{moore2008superpixel,moore2010lattice} generate superpixels that conform to a grid topology which can be used by many vision algorithms conveniently.
However, the computational complexity of both is high and the segmentation accuracy is inferior to the state-of-the-art~\cite{LiC15}.
In additional to grid topology, some algorithms \cite{todorovic2008unsupervised,yang2012non,mei2013segment} use a tree structure of regions to represent an image.
Felzenszwalb and Huttenlocher \cite{felzenszwalb2004efficient} propose a method that can provide such a structure by adding edges between segments. This structure was adopted in \cite{mei2013segment} to replace the minimum spanning tree used in Yang's non-local stereo matching algorithm\cite{yang2012non}.
%
Nevertheless, its under-segmentation error is high as shown by the recent superpixel benchmark evaluation \cite{neubert2012superpixel}.
Furthermore,  many vision algorithms \cite{kohli2009robust,yan2013hierarchical,jiang2013salient}
benefit from hierarchical or multi-scale segmentation but most superpixel methods do not consider such hierarchical structures.
As a result, superpixel algorithms must be performed several times to generate superpixels at different scales, which increase the computational cost.

\setlength{\fboxrule}{1.0pt}
\setlength{\fboxsep}{0in}
\begin{figure}[!t]
\centering
{\includegraphics[width=0.98\linewidth]{./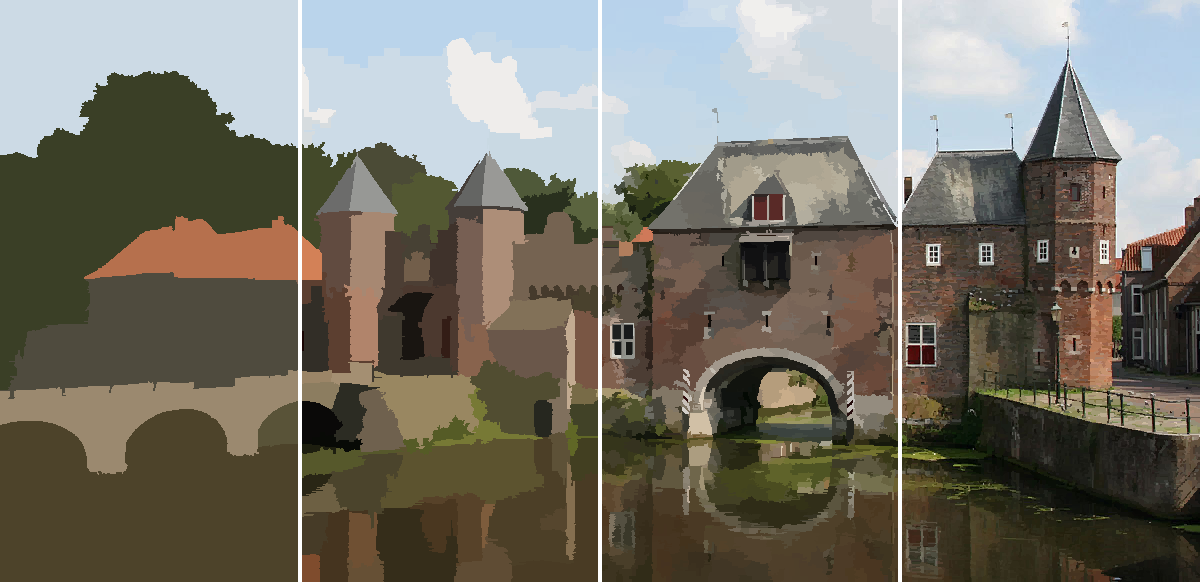}} \vspace{2mm}
\caption{\textbf{Super Hierarchy}: segmentation with 16, 256 ,4096, and 65536 superpixels. Original image size: $1200 \times 582$.}
\label{fig:SH:introduction}
\end{figure}

In contrast, this paper describes a method~\footnote{Source code is available at ~\url{http://www.cs.cityu.edu.hk/~qiyang/publications/SH/}} that enjoys all these properties as summarized in Table \ref{tab:property}.
Our method efficiently builds a superpixel hierarchy that can generate any number of superpixels (between one and the number of pixels) on the fly (as shown in Figure \ref{fig:SH:introduction}).
These superpixels are organized by a tree structure.
Extensive experiments (Section~\ref{sec:experiment}, \ref{sec:application}) demonstrate that our algorithm outperforms state-of-the-art both
in terms of accuracy and efficiency.
%

\section{Related Work}
\label{sec:related}

There is a large literature on image segmentation.
In this section, we briefly discuss the most relevant methods to this work:
hierarchical image segmentation and efficient superpixel extraction.

\subsection{Hierarchical Image Segmentation}

Hierarchical image segmentation methods generate a set of segments with
different details in which the ones on the coarser levels are
composed of regions on the finer levels.


The segmentation algorithm \cite{akbas2010ramp} partitions a given
image into homogeneous regions of a priori unknown shape, size and
degree of photometric homogeneity and organized in a hierarchy
\cite{ahuja1996transform}.
Nodes on the upper levels correspond to large segments while their
children nodes capture finer details.
A connected segmentation tree \cite{ahuja2008connected} includes
additional edges to  sibling nodes by introducing neighboring Voronoi
regions.
The segment tree structure has been applied to image classification,
semantic image segmentation and object detection \cite{akbas2014low}.

The method \cite{pami-11-malik,APBMM2014} transforms the output of any contour
detector into a hierarchical region tree.
To this end, it uses the oriented watershed transform to
construct a set of initial regions
followed by an agglomerative clustering procedure to construct a hierarchical representation.
This hierarchical segmentation method has been successfully used in numerous
recognition and detection problems \cite{gu2009recognition,girshick2014rich}.

%
Although image segmentation plays an important role in computer vision, accurate segmentation methods are often time-consuming which extremely limit their application domains.
On the other hand, more algorithms have resorted to superpixels
which can be generated efficiently with low under-segmentation error.

\subsection{Efficient Superpixel Extraction}

%
We review the state-of-the-art superpixel algorithms
that are either based on image partition or region merging.

{\flushleft \bf{Image Partition.}}
These algorithms start from an initial rough partition of an image, typically with a regular grid and then
refine the segments iteratively, e.g., SLIC \cite{achanta2012slic} and SEEDS \cite{van2012seeds}.
SLIC is an adaptation of the $k$-means clustering for superpixel generation.
It limits the search space of each cluster center and results in a
significant speed-up  over the conventional $k$-means clustering.
SEEDS builds an objective function that can be maximized by
a simple hill-climbing optimization process efficiently.
By avoiding computing distance from centers, it directly exchanges
pixels between superpixels by moving the boundaries.
These methods often have the compactness constraint that favors
equal-sized or regular-shaped segments.
However, the generated superpixels do not adhere well to image boundaries,
especially for fine-structured objects as they are not well modeled
by regular sampling, e.g. Figure~\ref{fig:sh_slic}.

{\flushleft \bf{Region Merging.}}
These algorithms operate on growing regions
into segments.
A representative algorithm is introduced by Felzenszwalb and
Huttenlocher (FH) \cite{felzenszwalb2004efficient}.
FH is a graph-based method in which pixels are vertices and edge
weights measure the dissimilarity between vertices.
Similar to other region merging methods \cite{salembier2000binary,calderero2010region},
it uses the Kruskal's algorithm \cite{west2001introduction} to build a minimum spanning forest
in which each tree is a segment.
Each vertex is initially placed in its own component, and the FH
method merges regions by a criterion that the resulting
segmentation is neither too coarse nor too fine.

Our approach is mostly related to FH which also works in a
growing manner.
FH adaptively adjusts segmentation criterion based on
the degree of variability in neighboring regions of the image,
such that it obeys certain global properties even though makes greedy decisions.
However, the under-segmentation error is high
as shown by the recent studies
\cite{achanta2012slic,van2012seeds}.
%
%
Our proposed algorithm dynamically adjusts the weights of the graph by
aggregating the attributes of clusters during segmentation.
We show that this feature aggregation scheme outperforms the state-of-the-art
methods in all evaluation metrics.
%

\section{Superpixel Hierarchy}
\label{sec:method}

Let $\mathcal{G}=(\mathcal{V},\mathcal{E})$ denote an undirected graph
consisted of vertices $v \in \mathcal{V}$ and edges $e \in \mathcal{E}
\subseteq \mathcal{V} \times \mathcal{V}$ with cardinalities $ n =
\left| \mathcal{V} \right|$ and $m = \left| \mathcal{E} \right|$.
Each pixel is associated with a vertex and locally connected to
its 4 neighbors.
Each edge $e_{ij} = ({v_i},{v_j})$ is assigned a weight (typically
non-negative real number) that measures the dissimilarity between the
two vertices.
%
%
In the superpixel segmentation task, let $k$ denote the number of superpixels to be extracted,
a segmentation $\mathcal{S}$ of a graph $\mathcal{G}$ is a partition of $\mathcal{V}$ into $k$ disjoint components and each component $\mathcal{C} \in \mathcal{S}$ corresponds to a connected subgraph $\mathcal{G}' = (\mathcal{V}',\mathcal{E}')$, where $\mathcal{V}' \subseteq \mathcal{V}$ and $\mathcal{E}' \subseteq \mathcal{E}$.

Our algorithm belongs to a class of region merging methods \cite{felzenszwalb2004efficient,calderero2010region}. Different from traditional methods that use the Kruskal's algorithm \cite{west2001introduction} as merging order, we grow regions in Bor\r{u}vka's fashion \cite{west2001introduction}.
%
%
The advantages are three-fold:
\begin{asparaitem}
\item The Bor\r{u}vka's algorithm has a linear time solution \cite{mares2004} and is parallelizable.
\item Multi-scale information can be incorporated into one unified framework, i.e.,
after each iteration, the graph weights are updated according to newly formed clusters.
\item A hierarchy is built during merging from which any amount of superpixels can be generated on the fly.
\end{asparaitem}

We first review the Bor\r{u}vka's algorithm, and then address the efficiency and accuracy issues by edge contraction and feature aggregation, respectively.

\begin{figure}[!t]
\def\swidth{0.26\linewidth}
\renewcommand{\tabcolsep}{0.0 pt}
\centering
\scalebox{0.76}{
\hspace{-1.5mm}
\begin{tabular}{ccccc}
\includegraphics[width=\swidth]{./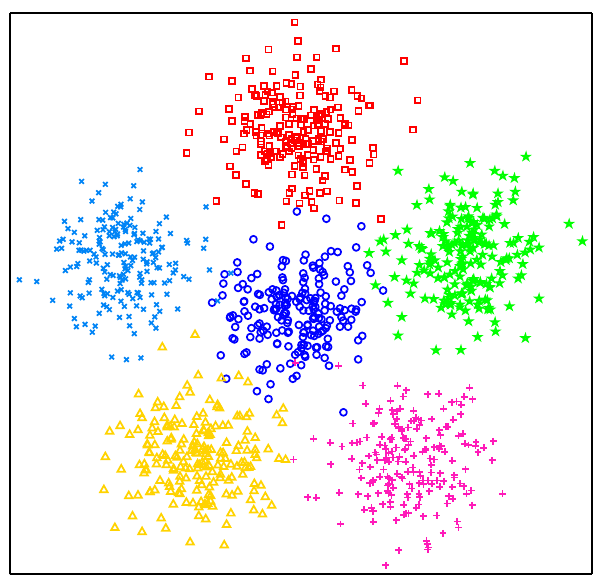}        &
\includegraphics[width=\swidth]{./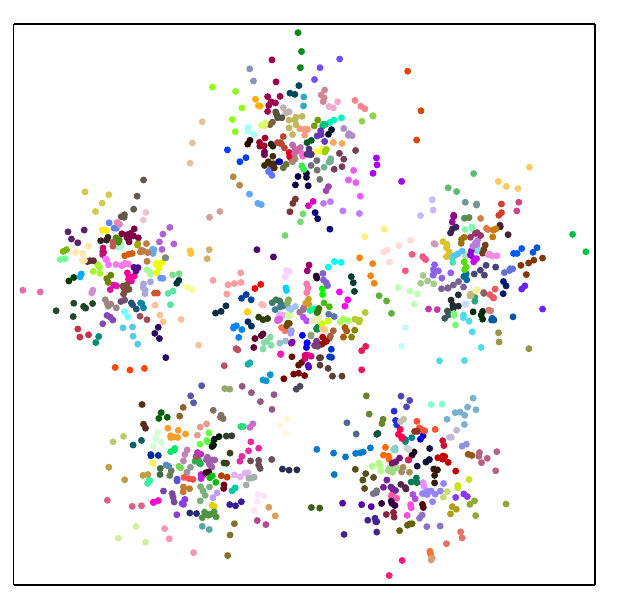}   &
\includegraphics[width=\swidth]{./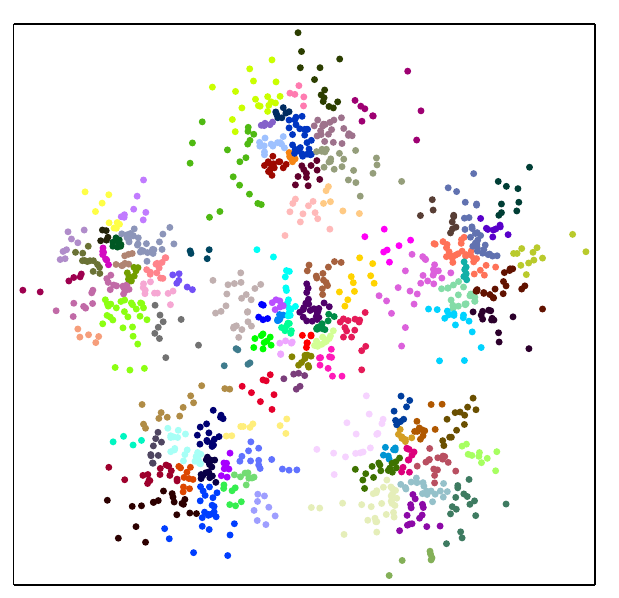}   &
\includegraphics[width=\swidth]{./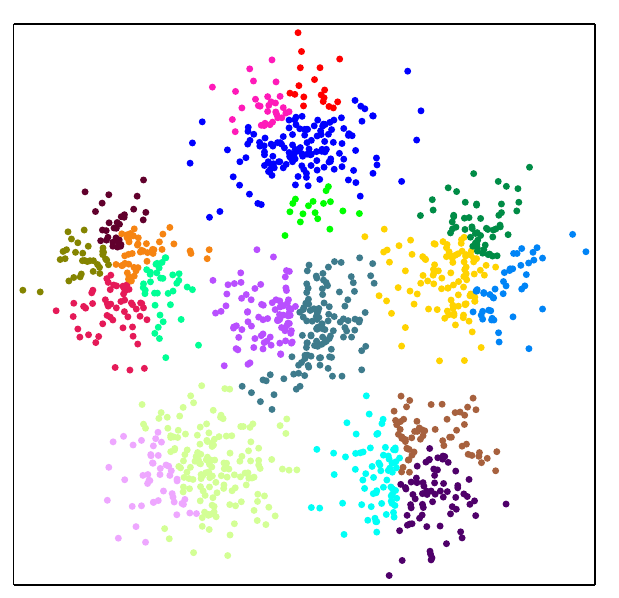}   &
\includegraphics[width=\swidth]{./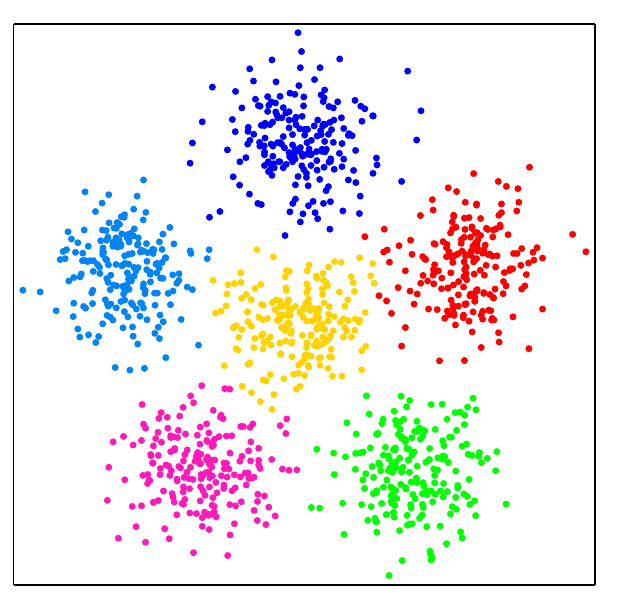}   \\

\multirow{1}{\swidth}{\centering\scriptsize{(a) input:   1200 nodes and 3041 edges}}   &
\multirow{1}{\swidth}{\centering\scriptsize{(b) 1st iter: 373 nodes and 659 edges}} &
\multirow{1}{\swidth}{\centering\scriptsize{(c) 2nd iter:  99 nodes and 189 edges}} &
\multirow{1}{\swidth}{\centering\scriptsize{(d) 3rd iter:  19 nodes and  29 edges}} &
\multirow{1}{\swidth}{\centering\scriptsize{(e) 4th iter:   6 nodes and   8 edges}} \vspace{4mm}\\

\includegraphics[width=\swidth]{./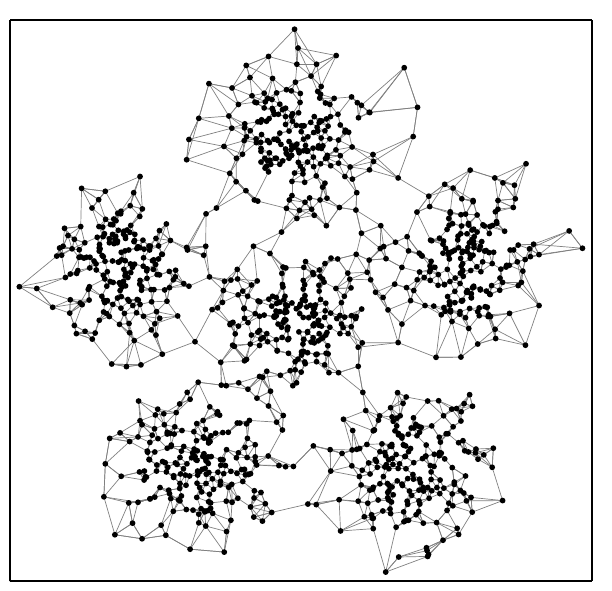}     &
\includegraphics[width=\swidth]{./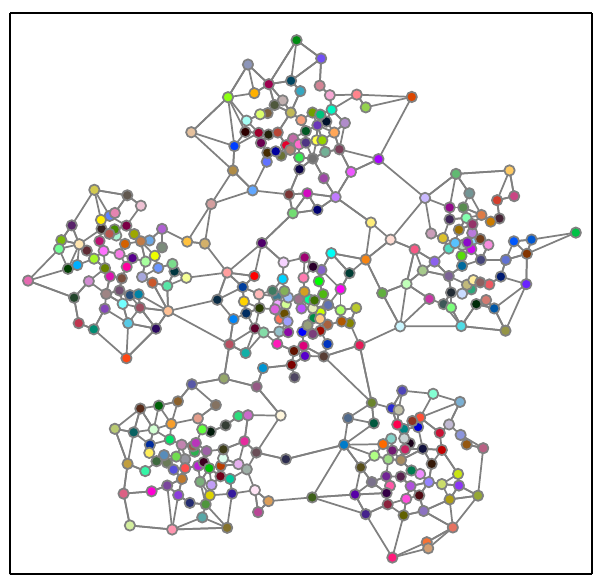}   &
\includegraphics[width=\swidth]{./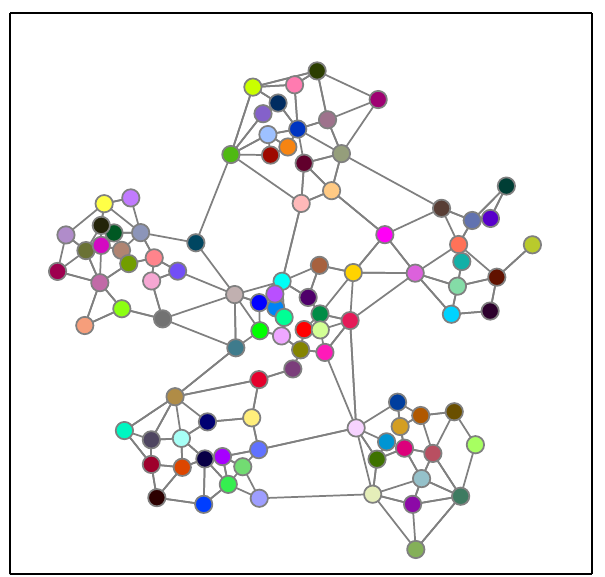}   &
\includegraphics[width=\swidth]{./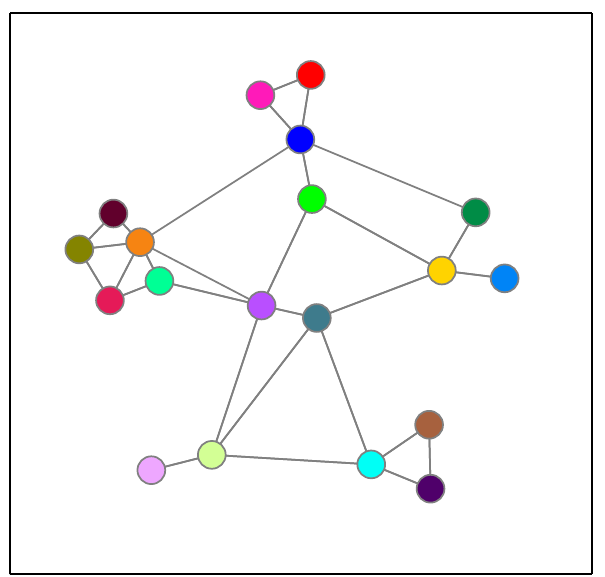}   &
\includegraphics[width=\swidth]{./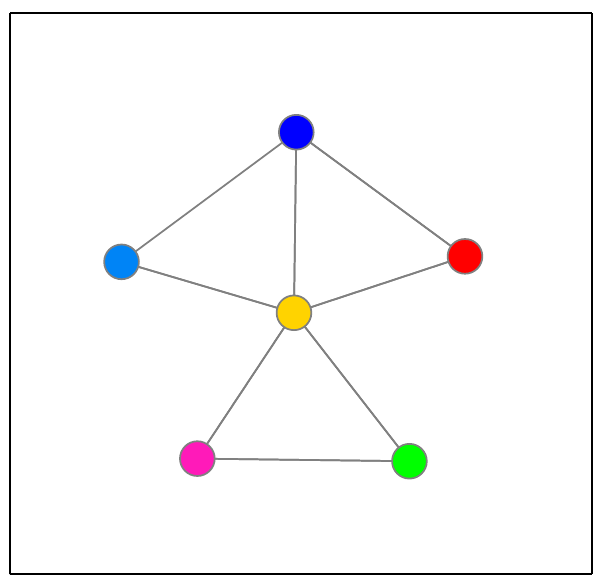} \vspace{1mm}\\

\end{tabular}}
\caption{
\textbf{Towards concurrently generating all scales of superpixels}.
(a) A data set consisting of six Gaussian clouds and its 4 nearest neighbor graph. (b)-(e) First 4 iterations of proposed SH algorithm. Concurrently computing all scales of superpixels is achieved by region merging. Unlike other region merging methods \cite{salembier2000binary,felzenszwalb2004efficient,calderero2010region} that use the Kruskal's algorithm \cite{west2001introduction}, we adopt
the Bor\r{u}vka's algorithm \cite{west2001introduction} to grow a spanning tree.
The advantages are three-fold. 1)
The Bor\r{u}vka's algorithm  has linear time solution~\cite{mares2004} and is parallelizable.
As shown in (b)-(e), the numbers of nodes and edges are decreasing geometrically after each iteration which enables linear time complexity.
2) Multi-scale information can be incorporated into a unified framework: after each iteration, the graph weights are updated according to newly formed clusters. 3) A natural hierarchy is built during merging from which any amount of superpixels can be generated on the fly. }
\end{figure}

\subsection{Superpixels via the Bor\r{u}vka's Algorithm}
\label{sec:boruvka}

The Bor\r{u}vka's algorithm computes a \emph{Minimum Spanning Tree} (MST) in a bottom-up manner.
Consider a graph as a forest with $n$ trees, namely one vertex itself is a tree.
For each tree, we find its nearest neighbor which is connected by the
lightest edge, and join them together.

Formally, let $\mathcal{C}_2$ denote the nearest neighbor of
$\mathcal{C}_1$ ($\mathcal{C}_1$ may not be the nearest neighbor of
$\mathcal{C}_2$).
 We define the distance between two trees as
\begin{equation}
D(\mathcal{C}_1,\mathcal{C}_2) = \mathop {\min }\limits_{{v_i} \in
  \mathcal{C}_1,{v_j} \in \mathcal{C}_2,({v_i},{v_j}) \in \mathcal{E}
} w(({v_i},{v_j})).
\label{dist1}
\end{equation}

%
The Bor\r{u}vka's algorithm repeats merging trees in this manner until
only one tree is left.
The major difference between Bor\r{u}vka's algorithm and Kruskal's algorithm is that
the former searches edges locally and simultaneously while the later sorts the edges globally and conducts sequentially.
This means that the Bor\r{u}vka's algorithm can be processed in parallel.
In addition, the Bor\r{u}vka's algorithm takes the prior that clusters are uniformly distributed
which overcome the drawback of Kruskal's algorithm that tends to produce heavily unbalanced clusters~\cite{west2001introduction}.
In the proposed superpixel hierarchy method,
we use the Bor\r{u}vka's algorithm to build a MST.
At the same time, the order that each edge is added to the MST is
recorded.
Once an edge is added to the MST, the number of trees in the forest is
reduced by one.
Suppose that $k$ superpixels need to be extracted, we connect vertices by the
first $n-k$ edges, this results in $k$ connected components which are
the superpixels exactly.
%

\subsection{Linear Time Algorithm via Edge Contraction}
\label{sec:complexity}

In this section, we re-formulate Bor\r{u}vka's algorithm with regard to edge contraction.
Instead of maintaining a forest of trees, we can keep each tree contracted to a single
vertex.
This reduces the number of vertices and edges substantially,
thereby speeding up the computation.

An edge contraction is illustrated in Figure \ref{fig:con}. Figure
\ref{fig:con}(a) shows a graph with a number in each vertex
representing its attributes (e.g., intensity of a pixel).
Edge weights are computed
by the absolute distance of their two ends.
An edge contraction is performed between vertex $4$ and $2$.
After contracting the edge,
as shown in Figure \ref{fig:con}(b), vertex $4$ and $2$ become a
supervertex, resulting in a self-loop and two parallel edges.
A flattening operation is followed in Figure
\ref{fig:con}(c) by removing the self-loop and replacing parallel edges
by the lightest one.

We explain the details of our implementation along with complexity analysis.
We denote the graph at the beginning of the $i$-th iteration by $\mathcal{G}_i$ and the number of vertices and edges of this graph by $n_i$ and $m_i$, respectively.

\begin{lemma}[] The SH algorithm stops in $O(\log n)$ iterations.
\label{lemma:1}
\end{lemma}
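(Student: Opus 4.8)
The plan is to show that each iteration of the Borůvka-style contraction reduces the number of vertices by at least a constant factor, which immediately bounds the iteration count logarithmically. The key structural observation is the one the paper already highlights in the figure caption: in every iteration, each tree (supervertex) selects its lightest incident edge and merges with its nearest neighbor. First I would fix the start of iteration $i$ and consider the graph $\mathcal{G}_i$ with $n_i$ vertices. I would form the directed graph $\mathcal{H}_i$ on these $n_i$ vertices in which each vertex points to the neighbor connected by its chosen lightest edge. Since every non-isolated vertex selects exactly one outgoing edge, $\mathcal{H}_i$ has at least one outgoing arc per vertex (assuming the graph is still connected with more than one vertex, so no vertex is isolated).

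Next I would argue that each connected component of the undirected graph formed by these chosen edges contains at least two vertices and therefore gets contracted into a single supervertex. The standard fact is that a functional graph where every vertex has out-degree one must contain a cycle in each component, and with distinct edge weights the minimal cycle is exactly a $2$-cycle (vertex $a$ picks the edge to $b$ and $b$ picks the same edge back to $a$, since that edge is the lightest for both). Hence every vertex is merged with at least one partner, so the chosen edges induce components of size at least $2$. Collapsing each such component yields $n_{i+1} \le n_i / 2$, because a partition of $n_i$ vertices into groups each of size at least $2$ produces at most $n_i/2$ groups.

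From the halving bound the conclusion is routine: starting from $n_1 = n$ vertices, after $i$ iterations we have $n_{i+1} \le n_i/2$, so $n_{i+1} \le n / 2^{i}$. The algorithm terminates once a single vertex remains, which happens as soon as $n / 2^{i} < 2$, i.e. after $i = O(\log n)$ iterations. I would state this as the geometric decay the paper already observes numerically in the figure (373, 99, 19, 6, \dots), confirming the $O(\log n)$ bound.

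The main obstacle is the tie-breaking subtlety in the claim that every component of the chosen-edge graph has size at least two. If edge weights are not distinct, two vertices might each point to different neighbors and one must rule out a configuration that leaves some vertex unmatched. The clean fix is to impose a consistent tie-breaking rule (for instance, break ties by a fixed total order on edges, e.g. lexicographic index order), which makes the selected-edge relation behave exactly as in the distinct-weight case and guarantees the $2$-cycle in each component. With that convention the out-degree-one argument goes through unchanged, and everything else is elementary counting, so I expect no further difficulty.
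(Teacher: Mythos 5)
Your proof is correct and takes essentially the same route as the paper's: every tree merges with at least one of its neighbors, so the number of vertices at least halves in each iteration, giving the $O(\log n)$ bound. One minor remark: the tie-breaking concern is immaterial for this counting argument --- even with equal weights, every vertex still selects \emph{some} incident edge and hence lies in a merged component of size at least two (ties only matter for correctness of the selected edges as MST edges, e.g.\ avoiding cycles, not for the halving bound).
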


\begin{proof}
Each tree gets merged with at least one of its neighbors, and the
number of trees in $\mathcal{G}_i$ decreases by at least a factor of
two.
Thus, the SH algorithm stops in $O(\log n)$ iterations.
\end{proof}

\begin{lemma}[] Each iteration of SH algorithm runs in $O(m_i)$
  time.
\end{lemma}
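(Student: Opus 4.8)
The plan is to decompose a single iteration into its elementary phases and bound each one separately, then sum. An iteration of SH (in the edge-contraction formulation of Section~\ref{sec:complexity}) consists of four phases: (i) computing, for every supervertex of $\mathcal{G}_i$, its minimum-weight outgoing edge, which fixes its nearest neighbor; (ii) performing the merges, i.e.\ determining the connected components of the graph formed by the selected edges; (iii) relabeling each merged group into a single supervertex; and (iv) flattening, namely discarding self-loops and replacing every bundle of parallel edges by its lightest representative. I would establish that each phase costs $O(m_i)$, using throughout that $n_i = O(m_i)$ (the working graph stays connected during MST construction, so $n_i \le m_i + 1$), and conclude by adding the four bounds.

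The first three phases are routine. Phase (i) is a single sweep over the $m_i$ edges that records, for each of the $n_i$ components, the lightest incident edge, hence $O(m_i)$. In phase (ii) each supervertex selects exactly one edge, so the selected edges number at most $n_i$ and form a pseudoforest; their connected components can be extracted by one graph traversal in $O(n_i)$ time, or equivalently with a disjoint-set structure in $O(n_i\,\alpha(n_i))$ amortized time. Phase (iii) assigns contiguous integer labels to the surviving supervertices and rewrites both endpoints of every edge, which is one pass over vertices and edges and costs $O(n_i + m_i) = O(m_i)$.

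The crux is phase (iv): deduplicating the (up to $m_i$) contracted edges in linear time \emph{without} sorting. The idea is a timestamp-based bucketing sweep. I maintain an auxiliary array indexed by supervertex label and process the adjacency list of each source supervertex $a$ once; for each contracted neighbor $b$ I consult a ``last-visited'' stamp to decide whether an edge $(a,b)$ has already been emitted for the current $a$, in which case I simply update its stored minimum weight, and otherwise I emit a fresh edge and set the stamp to $a$. Self-loops ($a = b$) are dropped on sight. Because the stamp records the current source, the auxiliary array never needs resetting between sources, so the sweep touches each of the $m_i$ contracted edges a constant number of times, giving $O(m_i)$.

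The main obstacle is precisely this linear-time deduplication. A naive implementation that re-initializes the per-vertex scratch array for each source would incur $\Theta(n_i)$ overhead per source and degrade to $O(n_i^2)$, while sorting the edges to group parallels would cost $O(m_i \log m_i)$ and break linearity. The timestamp trick (equivalently, a radix bucketing of the contracted edges by endpoint) is what keeps the phase linear. Once it is in place, summing the four $O(m_i)$ bounds and invoking $n_i = O(m_i)$ yields that each iteration runs in $O(m_i)$ time, as claimed.
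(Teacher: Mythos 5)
Your proof is correct and takes essentially the same route as the paper's: the same per-iteration decomposition (lightest-edge selection by a sweep over all edges, component finding on the graph of chosen edges by one traversal, relabeling, then flattening), each phase bounded by $O(m_i)$ or $O(n_i)$, with $O(m_i+n_i)=O(m_i)$. The only divergence is the flattening step, where the paper removes self-loops and parallel edges by \emph{histogram (counting) sorting} the edges lexicographically --- which is linear precisely because endpoints are integer labels bounded by $n_i$ --- so your caveat that ``sorting breaks linearity'' applies only to comparison sorting, and your timestamp sweep is just the equivalent radix-bucketing alternative you yourself note.
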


\begin{proof}
First, the nearest neighbor search for each tree loops through all edges
and determines whether one edge is the lightest one for the trees on either
endpoint, which takes $O(m_i)$ time.
Next, the histogram sorting \cite{cormen2001introduction} of chosen edges takes $O(n_i)$ time.
%
In addition, tree growing uses an auxiliary graph whose vertices are the labels of the original trees and
edges correspond to the chosen lightest edges. The auxiliary graph has $n_i$ vertices and edges.
We find the connected components of this graph using depth-first search, which takes $O(n_i)$ time~\cite{cormen2001introduction}.

%
%
Edge contraction is performed by histogram sorting the edges lexicographically
and then removing the loops and parallel edges, which takes $O({m_i})$ time.
Thus, each iteration of the SH algorithm takes $O({m_i} + {n_i}) =
O({m_i})$ time.

\end{proof}

\begin{theorem}[] The SH algorithm runs in $O(n)$ on
planar graphs.
\end{theorem}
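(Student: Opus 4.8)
The plan is to chain together the two lemmas already established with a structural sparsity fact about planar graphs. By Lemma~2 the $i$-th iteration costs $O(m_i)$, so the total running time is $\sum_i O(m_i)$, and by Lemma~1 the sum has only $O(\log n)$ terms. The whole argument therefore reduces to showing that $\sum_i m_i = O(n)$, which I would obtain by proving that each $m_i$ is linear in $n_i$ and that the $n_i$ decay geometrically.

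First I would argue that every $\mathcal{G}_i$ produced by the contraction-and-flattening procedure is a \emph{simple planar} graph. Planarity is preserved under edge contraction, since contracting an edge of a planar graph yields a graph minor, and minors of planar graphs are planar; the original $\mathcal{G}_0$ is the 4-connected pixel grid, which is planar. The flattening step of Section~\ref{sec:complexity} then deletes self-loops and collapses parallel edges to the lightest one, so $\mathcal{G}_i$ is in addition simple. For a simple planar graph Euler's formula gives the bound $m_i \le 3 n_i - 6$, hence $m_i = O(n_i)$. This is the step I expect to require the most care: I must make sure the invariant ``simple and planar'' genuinely holds at the \emph{start} of each iteration, i.e.\ that the flattening is applied before the edge count is measured, otherwise the transient parallel edges created by contraction would break the $3n_i-6$ bound.

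Next I would invoke the counting already used in the proof of Lemma~1: every tree merges with at least one neighbor, so the number of supervertices at least halves each round, giving $n_{i+1} \le n_i/2$ and therefore $n_i \le n/2^{\,i-1}$. Combining this with $m_i = O(n_i)$ yields a geometrically decreasing bound on the per-iteration cost.

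Finally I would sum over all iterations and use the convergence of the geometric series,
\begin{equation}
\sum_i O(m_i) = \sum_i O(n_i) = O\!\left( \sum_{i\ge 1} \frac{n}{2^{\,i-1}} \right) = O(2n) = O(n),
\label{eq:planar-sum}
\end{equation}
which gives the claimed $O(n)$ bound. Note that the geometric decay makes the $O(\log n)$ iteration count from Lemma~1 harmless, since the later, larger-indexed iterations contribute a vanishing share of the work; the dominant cost is the first iteration on the full pixel grid. The only genuinely planar-specific ingredient is the edge-sparsity bound $m_i = O(n_i)$, so the same argument would degrade to $O(n \log n)$ on a general graph where only $m_i = O(m)$ can be assumed.
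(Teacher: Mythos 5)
Your proposal is correct and follows essentially the same route as the paper's own proof: planarity is preserved under edge contraction, flattening keeps each $\mathcal{G}_i$ simple so Euler's formula gives $m_i = O(n_i)$, and the geometric decay $n_i \le n/2^{i}$ from Lemma~\ref{lemma:1} makes the total cost a convergent geometric series summing to $O(n)$. Your additional observations (the need to measure $m_i$ after flattening, and the degradation to $O(n\log n)$ on general graphs) are sound refinements of the same argument rather than a different approach.
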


\begin{proof}
When the input is a planar graph, every $\mathcal{G}_i$ is
planar because the class of planar graphs is closed under edge
contraction~\cite{west2001introduction}.
Moreover, $\mathcal{G}_i$ is also simple (graph loops and parallel edges have already been removed)
such that we can use Euler's formula on the number of edges of planar simple graphs to obtain ${m_i} \le 3{n_i}$.
From Lemma \ref{lemma:1}, we know that ${n_i} \le  n/{2^i}$, and therefore
the total time complexity of the SH algorithm is $O(\sum\nolimits_i {{m_i}} ) = O(\sum\nolimits_i {{n \mathord{\left/
 {\vphantom {n {{2^i}}}} \right.
 \kern-\nulldelimiterspace} {{2^i}}}} ) = O(n).$
\end{proof}

\subsection{Improve Robustness via Feature Aggregation}

\begin{figure}[!t]\footnotesize
\centering
\def\arraystretch{0.5}
\renewcommand{\tabcolsep}{4.0 pt}
\def\swidth{0.28\linewidth}
\begin{tabular}{cc}
\hspace{-2mm}
\begin{adjustbox}{valign=c}
\begin{tabular}{cc}
\includegraphics[width=\swidth]{./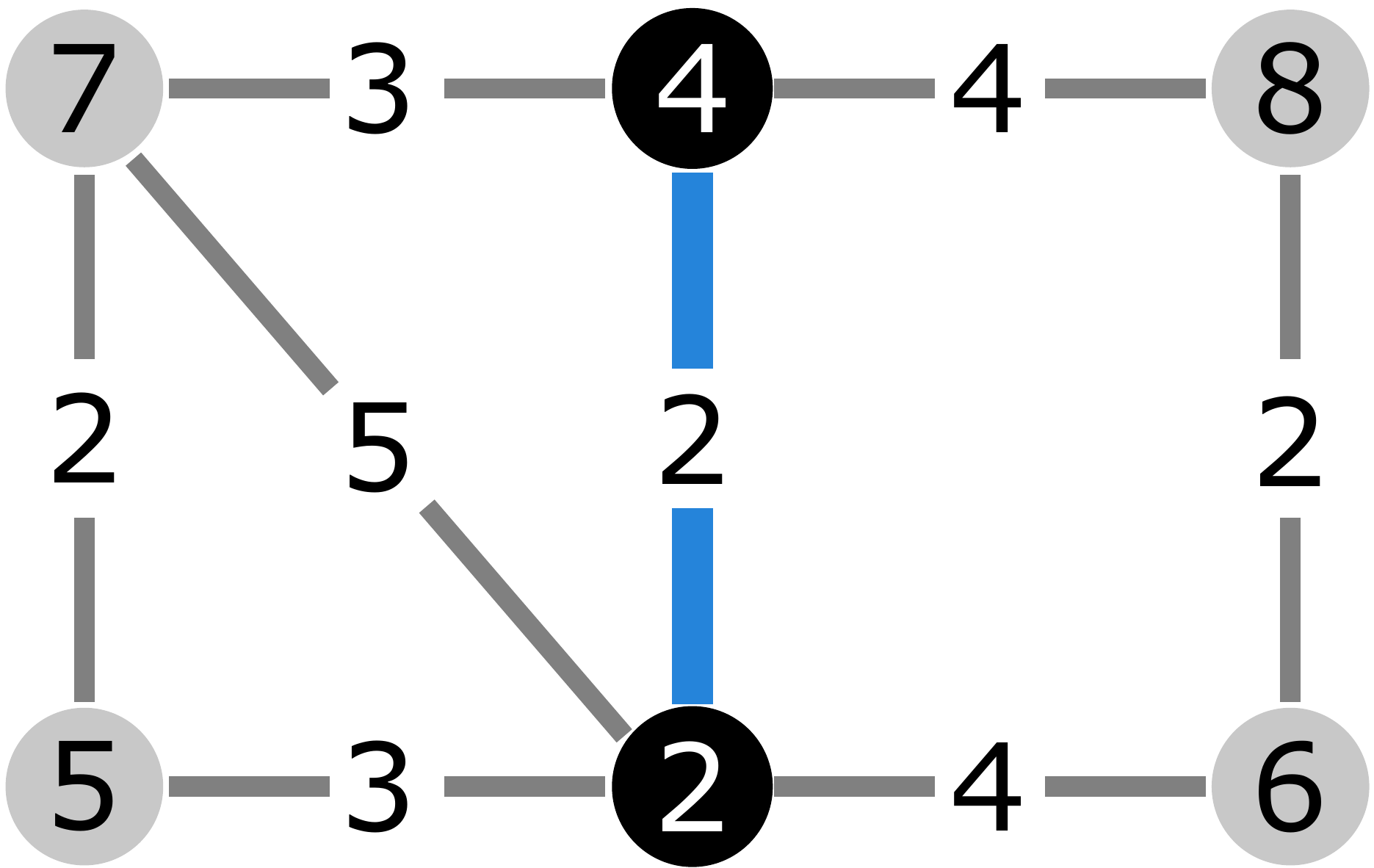} &
\includegraphics[width=\swidth]{./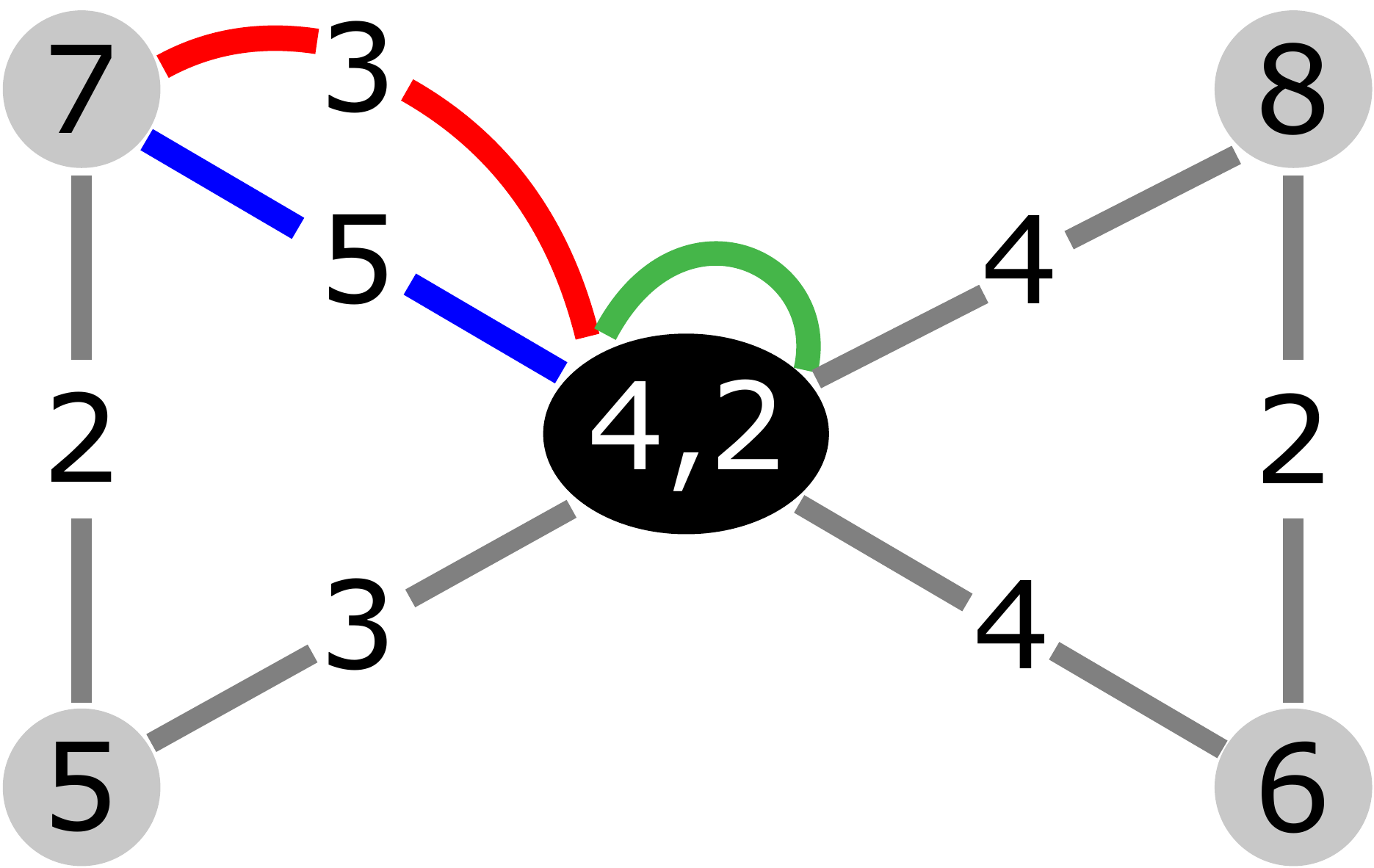} \\
 (a) & (b) \\
\includegraphics[width=\swidth]{./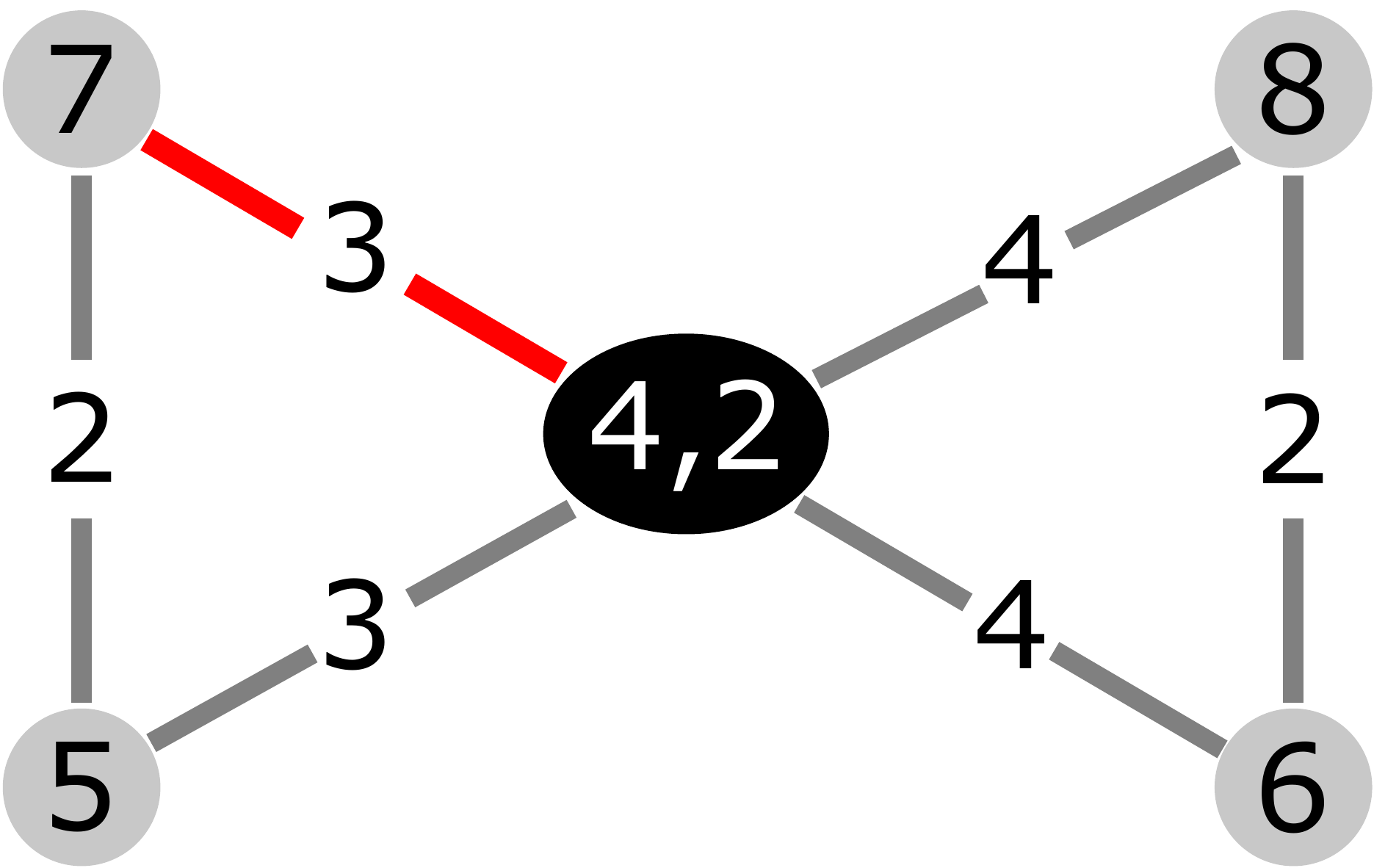} &
\includegraphics[width=\swidth]{./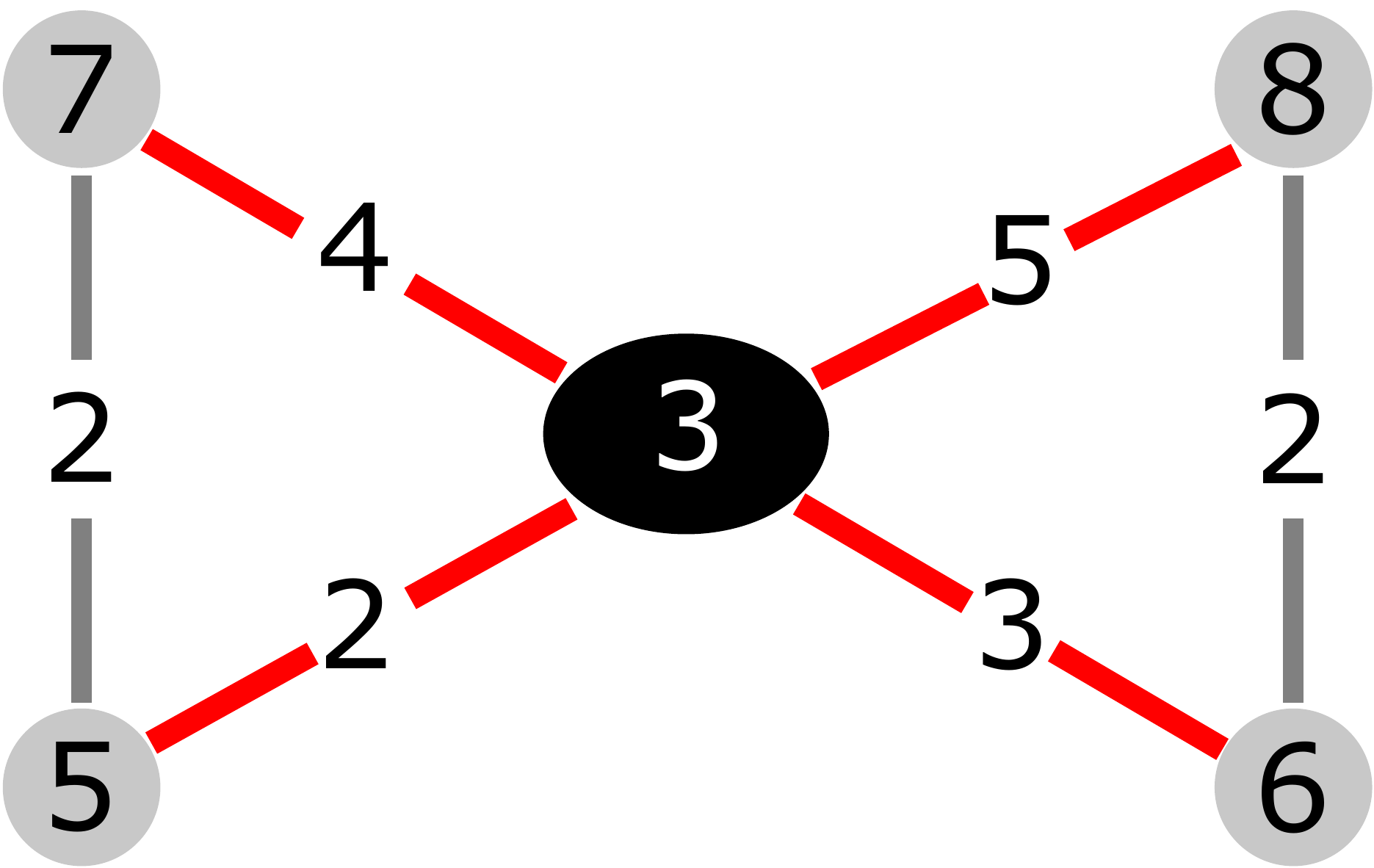} \\
(c) & (d) \\
\end{tabular}
\end{adjustbox}
& \hspace{-4mm}
\begin{adjustbox}{valign=c}
\begin{tabular}{c}
\setlength{\fboxrule}{1.0pt}
\setlength{\fboxsep}{0in}
\fbox{\includegraphics[width=0.25\linewidth]{./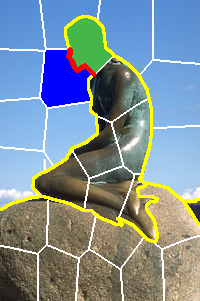}}
\vspace{1mm}\\ (e) \vspace{1mm}
\end{tabular}
\end{adjustbox}
\end{tabular}
\caption{\textbf{Illustration of edge contraction and feature aggregation.}
The number in each vertex representing its features.
Edge weights are computed by the absolute distance of their two ends.
(a)-(c) An edge contraction is performed between vertex $4$ and $2$.
After contracting an edge, the graph becomes a multigraph with a self-loop (green line) and parallel edges (blue and red line).
A flattening operation is followed by removing the self-loop and
replacing parallel edges by the lightest one (red line).
(d) Feature aggregation is performed after each iteration by gathering features from newly formed clusters and then updating edge weights (red lines).
(e) We use two kinds of features: color and edge confidence (yellow curves).
Our method explicitly maintains the connectivity of clusters so that the edge confidence (red curve) between two regions can be determined directly.
This is our advantage over SLIC that connectivity is enforced additionally.
}
\label{fig:con}
\end{figure}

The Bor\r{u}vka's algorithm can be applied to superpixel segmentation
directly.
However, a straightforward application of this algorithm does not
generate satisfactory results
in our experiment.
The issues stem from its greedy and local nature.
%
Recall that Equation \eqref{dist1} measures the distance
between two trees as their minimum edge weight.
This measurement ignores much information as for each tree only the
attribute of one vertex is used and thus it is sensitive to outliers.
In addition to linear time complexity, another advantage of the
Bor\r{u}vka's algorithm is that it can incorporate
multi-scale information within one \textit{unified} framework.
Since we obtain new clusters after each iteration, it is natural to
aggregate the attributes of each cluster and update the weights
connected to other clusters.
Figure \ref{fig:con}(d) illustrates this procedure.
After merging vertex $4$ and $2$, they become a supervertex with average value of $3$.
The self-loop is removed and parallel edges are replaced by one edge.
At the same time, weights of all edges connected to the supervertex are
updated according to the distance of aggregated attributes (red lines
in Figure \ref{fig:con}(d)).
Feature aggregation takes advantages of ``the wisdom of crowds'' rather
than only two vertices such that better performance can be expected.

This feature aggregation procedure is motivated by SLIC in which
centroids are updated by calculating the new means after each iteration.
Intuitively, proposed SH is more robust and efficient than SLIC.
First, a limited number of centroids are updated in the SLIC procedure.
That is for an individual pixel, its attributes are unchanged which could be an outlier.
SH treats all pixels as centriods and generates a hierarchy in a fine-to-coarse manner, therefore, SH is more robust than SLIC.
Second, though both SH and SLIC search limited regions (SH works on a planar graph and SLIC searches around predefined centers)
for nearest neighbor assignment in one iteration, the number of nodes in SH reduce geometrically while SLIC remains.
SLIC could cut down several iterations for efficiency but the accuracy also decreases.
In addition, the features to be used for clustering depend on the task and may not lie in the Euclidean space (e.g. edge confidence)
so that centroids can not be calculated simply by means.
Specified to the edge feature, our experiments (Section \ref{sec:experiment}) show that edge information is very useful to superpixels.
Our method explicitly maintains the connectivity of clusters so that the edge confidence between two regions can be determined directly as shown in Figure~\ref{fig:con}(e).
This is our advantage over SLIC-like methods that connectivity must be enforced additionally so that it's unclear how to integrate edge information into such procedure efficiently.

Incorporating edge confidence, our distance measure becomes
\begin{equation}
D(\mathcal{C}_1,\mathcal{C}_2) = d_c \times d_e,
\label{dist2}
\end{equation}
where $d_c$ and $d_e$ are color and edge distance, respectively.
The color distance is measured by the absolute difference of mean color.
However, mean color is not sufficient to represent superpixels as they become larger and larger.
For better performance, we measure color difference by the ${\chi ^2}$ distance of color histograms after $j$ iterations.
We use the structured forest edges (SFE)~\cite{DollarICCV13edges} to compute edge features and the distance is measured by the average edge confidence between the two regions (red curve in Figure~\ref{fig:con}(e)).

\section{Experiments}
\label{sec:experiment}

\def\minpagewidth{0.33\linewidth}
\def\marksize{1.5}
\begin{figure*}[!ht]
\begin{center}
\scalebox{0.8}{
\hspace{-2mm}\begin{minipage}[b]{\minpagewidth}
\centering
\begin{tikzpicture}[/pgfplots/width=1.0\linewidth,/pgfplots/height=0.9\linewidth]
  \begin{axis}[ymin=0.91,ymax=0.97,xmin=100,xmax=600,enlargelimits=false,
  title={BSDS500},
  title style={yshift=-1mm},
  xlabel={Number of Superpixles},
  ylabel={Achievable Seg Accuracy (ASA)},
  ylabel shift={-2pt},
  xtick={100,150,200,300,400,600},
  xticklabels={100,,200,300,400,600},
  ytick={0,0.01,...,1},
  yticklabel style={/pgf/number format/precision=2},
  font=\scriptsize,
  legend columns=-1,
  legend entries={$\mbox{SH}_E$-Our, $\mbox{SH}$-Our, LSC~\cite{LiC15}, SEEDS~\cite{van2012seeds}, ERS~\cite{liu2011entropy}, SLIC~\cite{achanta2012slic}, FH~\cite{felzenszwalb2004efficient}},
  legend to name=named,
  grid=both,
  grid style=dotted,
  major grid style={white!20!black},
  minor grid style={white!70!black},
  axis equal image=false]

    \addplot+[color_SH_E,solid,mark=none,ultra thick]                            table[x=num,y=ASA] {./fig/bench/BSDS500_test_SH_E_SE.txt};
    \label{curve:sh_e}
    \addplot+[color_SH_D ,solid,mark=o,mark size=\marksize,thick]                table[x=num,y=ASA] {./fig/bench/BSDS500_test_SH_D.txt};
    \label{curve:sh}
    \addplot+[color_LSC,solid,mark=x,mark size=\marksize,thick]                  table[x=num,y=ASA] {./fig/bench/BSDS500_test_LSC.txt};
    \label{curve:lsc}
    \addplot+[color_SEEDS,solid,mark=triangle,mark size=\marksize,thick]         table[x=num,y=ASA] {./fig/bench/BSDS500_test_SEEDS.txt};
    \label{curve:seeds}
    \addplot+[color_ERS,solid,mark=+,mark size=\marksize,thick]                  table[x=num,y=ASA] {./fig/bench/BSDS500_test_ERS.txt};
    \label{curve:ers}
    \addplot+[color_SLIC,solid,mark=diamond,mark size=\marksize,thick]           table[x=num,y=ASA] {./fig/bench/BSDS500_test_SLIC.txt};
    \label{curve:slic}
    \addplot+[color_FH,solid,mark=square,mark size=\marksize,thick]              table[x=num,y=ASA] {./fig/bench/BSDS500_test_FH.txt};
    \label{curve:fh}

  \end{axis}
  \node at (0.03*\linewidth,0.69*\linewidth) {(a)};
\end{tikzpicture}
\end{minipage}

\hspace{-4mm}\begin{minipage}[b]{\minpagewidth}
\centering
\begin{tikzpicture}[/pgfplots/width=1.0\linewidth,/pgfplots/height=0.9\linewidth]
  \begin{axis}[ymin=0.06,ymax=0.19,xmin=100,xmax=600,
  title={BSDS500},
  title style={yshift=-1mm},
  xlabel={Number of Superpixles},
  ylabel={Under-Seg Error (UE)},
  ylabel near ticks,
  ylabel shift={-2pt},
  yticklabel style={/pgf/number format/fixed,/pgf/number format/precision=2},
  xtick={100,150,200,300,400,600},
  xticklabels={100,,200,300,400,600},
  ytick={0,0.02,...,1},
  font=\scriptsize,
  grid=both,
  grid style=dotted,
  major grid style={white!20!black},
  minor grid style={white!70!black},
  axis equal image=false]
    \addplot+[color_SH_E,solid,mark=none,ultra thick]                            table[x=num,y=UE] {./fig/bench/BSDS500_test_SH_E_SE.txt};
    \addplot+[color_SH_D,solid,mark=o,mark size=\marksize,thick]                 table[x=num,y=UE] {./fig/bench/BSDS500_test_SH_D.txt};
    \addplot+[color_LSC,solid,mark=x,mark size=\marksize,thick]                  table[x=num,y=UE] {./fig/bench/BSDS500_test_LSC.txt};
    \addplot+[color_SEEDS,solid,mark=triangle,mark size=\marksize,thick]         table[x=num,y=UE] {./fig/bench/BSDS500_test_SEEDS.txt};
    \addplot+[color_ERS,solid,mark=+,mark size=\marksize,thick]                  table[x=num,y=UE] {./fig/bench/BSDS500_test_ERS.txt};
    \addplot+[color_SLIC,solid,mark=diamond,mark size=\marksize,thick]           table[x=num,y=UE] {./fig/bench/BSDS500_test_SLIC.txt};
    \addplot+[color_FH,solid,mark=square,mark size=\marksize,thick]              table[x=num,y=UE] {./fig/bench/BSDS500_test_FH.txt};

  \end{axis}
  \node at (0.03*\linewidth,0.69*\linewidth) {(b)};
\end{tikzpicture}
\end{minipage}

\hspace{-4mm}\begin{minipage}[b]{\minpagewidth}
\centering
\begin{tikzpicture}[/pgfplots/width=1.0\linewidth,/pgfplots/height=0.9\linewidth]
  \begin{axis}[ymin=0.5,ymax=0.92,xmin=100,xmax=600,
  title={BSDS500},
  title style={yshift=-1mm},
  xlabel={Number of Superpixles},
  ylabel={Boundary Recall (BR)},
  ylabel shift={-2pt},
  xtick={100,150,200,300,400,600},
  xticklabels={100,,200,300,400,600},
  ytick={0,0.1,...,1},
  font=\scriptsize,
  legend pos=outer north east,
  grid=both,
  grid style=dotted,
  major grid style={white!20!black},
  minor grid style={white!70!black},
  axis equal image=false]
    \addplot+[color_SH_E,solid,mark=none,ultra thick]                            table[x=num,y=BR] {./fig/bench/BSDS500_test_SH_E_SE.txt};
    \addplot+[color_SH_D,solid,mark=o,mark size=\marksize,thick]                 table[x=num,y=BR] {./fig/bench/BSDS500_test_SH_D.txt};
    \addplot+[color_LSC,solid,mark=x,mark size=\marksize,thick]                  table[x=num,y=BR] {./fig/bench/BSDS500_test_LSC.txt};
    \addplot+[color_SEEDS,solid,mark=triangle,mark size=\marksize,thick]         table[x=num,y=BR] {./fig/bench/BSDS500_test_SEEDS.txt};
    \addplot+[color_ERS,solid,mark=+,mark size=\marksize,thick]                  table[x=num,y=BR] {./fig/bench/BSDS500_test_ERS.txt};
    \addplot+[color_SLIC,solid,mark=diamond,mark size=\marksize,thick]           table[x=num,y=BR] {./fig/bench/BSDS500_test_SLIC.txt};
    \addplot+[color_FH,solid,mark=square,mark size=\marksize,thick]              table[x=num,y=BR] {./fig/bench/BSDS500_test_FH.txt};
  \end{axis}
  \node at (0.03*\linewidth,0.69*\linewidth) {(c)};
\end{tikzpicture}
\end{minipage}

\hspace{-4mm}\begin{minipage}[b]{\minpagewidth}
\centering
    \begin{tikzpicture}[/pgfplots/width=1.0\linewidth,/pgfplots/height=0.9\linewidth]
      \begin{axis}[
      xmin=100,xmax=600,ymin=0.00,ymax=0.26,
      xtick={100,150,200,300,400,600},
      xticklabels={100,,200,300,400,600},
      ytick={0.00,0.02,0.04,0.06,0.08,0.10,0.12,0.16,0.18,0.22,0.24},
      yticklabels={0.00,,0.04,,0.08,,0.12,0.30,,0.66,},
      yticklabel style={/pgf/number format/.cd,fixed,precision=2},
      ylabel shift={-2pt},
      xlabel=Number of Superpixels,
      ylabel=Time in Seconds,
      title={BSDS500},
      title style={yshift=-1mm},
      font=\scriptsize,
      grid=both,
      grid style=dotted,
      major grid style={white!20!black},
      minor grid style={white!70!black}]

      \addplot+[color_SH_E,solid,mark=none,ultra thick]                            table[x=num,y=time] {./fig/bench/Time_BSDS500_test_SH_E.txt};
      \addplot+[color_SH_D,solid,mark=o,mark size=\marksize,thick]                 table[x=num,y=time] {./fig/bench/Time_BSDS500_test_SH_D.txt};
      \addplot+[color_LSC,solid,mark=x,mark size=\marksize,thick]                  table[x=num,y=time] {./fig/bench/Time_BSDS500_test_LSC.txt};
      \addplot+[color_SEEDS,solid,mark=triangle,mark size=\marksize,thick]         table[x=num,y=time] {./fig/bench/Time_BSDS500_test_SEEDS.txt};
      \addplot+[color_ERS,solid,mark=+,mark size=\marksize,thick]                  table[x=num,y=time] {./fig/bench/Time_BSDS500_test_ERS.txt};
      \addplot+[color_SLIC,solid,mark=diamond,mark size=\marksize,thick]           table[x=num,y=time] {./fig/bench/Time_BSDS500_test_SLIC.txt};
      \addplot+[color_FH,solid,mark=square,mark size=\marksize,thick]              table[x=num,y=time] {./fig/bench/Time_BSDS500_test_FH.txt};

      \end{axis}
      \node at (0.03*\linewidth,0.69*\linewidth) {(d)};
    \end{tikzpicture}
  \end{minipage}

}

\vspace{1mm}
\scalebox{0.8}{
\hspace{-2mm}\begin{minipage}[b]{\minpagewidth}
\centering
\begin{tikzpicture}[/pgfplots/width=1.0\linewidth,/pgfplots/height=0.9\linewidth]
\begin{axis}[ymin=0.84,ymax=0.96,xmin=100,xmax=600,enlargelimits=false,
  title={Pascal SegVOC12},
  title style={yshift=-1mm},
  xlabel={Number of Superpixles},
  ylabel={Achievable Seg Accuracy (ASA)},
  ylabel shift={-2pt},
  xtick={100,200,300,400,500,600},
  ytick={0,0.02,...,1},
  yticklabel style={/pgf/number format/precision=2},
  font=\scriptsize,
  legend pos=south east,
  grid=both,
  grid style=dotted,
  major grid style={white!20!black},
  minor grid style={white!70!black},
  axis equal image=false]
    \addplot+[color_SH_E,solid,mark=none,ultra thick]                            table[x=num,y=ASA] {./fig/bench/Pascal_12_val_SH_E_SE.txt};
    \addplot+[color_SH_D,solid,mark=o,mark size=\marksize,thick]                 table[x=num,y=ASA] {./fig/bench/Pascal_12_val_SH_D.txt};
    \addplot+[color_LSC,solid,mark=x,mark size=\marksize,thick]                  table[x=num,y=ASA] {./fig/bench/Pascal_12_val_LSC.txt};
    \addplot+[color_ERS,solid,mark=+,mark size=\marksize,thick]                  table[x=num,y=ASA] {./fig/bench/Pascal_12_val_ERS.txt};
    \addplot+[color_SLIC,solid,mark=diamond,mark size=\marksize,thick]           table[x=num,y=ASA] {./fig/bench/Pascal_12_val_SLIC.txt};
    \addplot+[color_FH,solid,mark=square,mark size=\marksize,thick]              table[x=num,y=ASA] {./fig/bench/Pascal_12_val_FH.txt};

  \end{axis}
  \node at (0.03*\linewidth,0.69*\linewidth) {(e)};
\end{tikzpicture}
\end{minipage}

\hspace{-4mm}\begin{minipage}[b]{\minpagewidth}
\centering
\begin{tikzpicture}[/pgfplots/width=1.0\linewidth,/pgfplots/height=0.9\linewidth]
\begin{axis}[ymin=0.84,ymax=0.96,xmin=100,xmax=600,enlargelimits=false,
  title={SBD},
  title style={yshift=-1mm},
  xlabel={Number of Superpixles},
  ylabel={Achievable Seg Accuracy (ASA)},
  ylabel shift={-2pt},
  xtick={100,200,300,400,500,600},
  ytick={0,0.02,...,1},
  yticklabel style={/pgf/number format/precision=2},
  font=\scriptsize,
  legend pos=south east,
  grid=both,
  grid style=dotted,
  major grid style={white!20!black},
  minor grid style={white!70!black},
  axis equal image=false]

    \addplot+[color_SH_E,solid,mark=none,ultra thick]                            table[x=num,y=ASA] {./fig/bench/SBD_SH_E_SE.txt};
    \addplot+[color_LSC,solid,mark=x,mark size=\marksize,thick]                  table[x=num,y=ASA] {./fig/bench/SBD_LSC.txt};
    \addplot+[color_ERS,solid,mark=+,mark size=\marksize,thick]                  table[x=num,y=ASA] {./fig/bench/SBD_ERS.txt};
    \addplot+[color_SLIC,solid,mark=diamon,mark size=\marksize,thick]            table[x=num,y=ASA] {./fig/bench/SBD_SLIC.txt};
    \addplot+[color_FH,solid,mark=square,mark size=\marksize,thick]              table[x=num,y=ASA] {./fig/bench/SBD_FH.txt};
    \addplot+[color_SH_D,solid,mark=o,mark size=\marksize,thick]                 table[x=num,y=ASA] {./fig/bench/SBD_SH_D.txt};

  \end{axis}
  \node at (0.03*\linewidth,0.69*\linewidth) {(f)};
\end{tikzpicture}
\end{minipage}

\hspace{-4mm}\begin{minipage}[b]{\minpagewidth}
\centering
\begin{tikzpicture}[/pgfplots/width=1.0\linewidth,/pgfplots/height=0.9\linewidth]
\begin{axis}[ymin=0.80,ymax=0.92,xmin=100,xmax=600,enlargelimits=false,
  title={COCO14},
  title style={yshift=-1mm},
  xlabel={Number of Superpixles},
  ylabel={Achievable Seg Accuracy (ASA)},
  ylabel shift={-2pt},
  xtick={100,200,300,400,500,600},
  ytick={0,0.02,...,1},
  yticklabel style={/pgf/number format/precision=2},
  font=\scriptsize,
  legend pos=south east,
  grid=both,
  grid style=dotted,
  major grid style={white!20!black},
  minor grid style={white!70!black},
  axis equal image=false]
    \addplot+[color_SH_E,solid,mark=none,ultra thick]                            table[x=num,y=ASA] {./fig/bench/COCO_14_val_SH_E_SE.txt};
    \addplot+[color_SH_D,solid,mark=o,mark size=\marksize,thick]                 table[x=num,y=ASA] {./fig/bench/COCO_14_val_SH_D.txt};
    \addplot+[color_LSC,solid,mark=x,mark size=\marksize,thick]                  table[x=num,y=ASA] {./fig/bench/COCO_14_val_LSC.txt};
    \addplot+[color_ERS,solid,mark=+,mark size=\marksize,thick]                  table[x=num,y=ASA] {./fig/bench/COCO_14_val_ERS.txt};
    \addplot+[color_SLIC,solid,mark=diamond,mark size=\marksize,thick]           table[x=num,y=ASA] {./fig/bench/COCO_14_val_SLIC.txt};
    \addplot+[color_FH,solid,mark=square,mark size=\marksize,thick]              table[x=num,y=ASA] {./fig/bench/COCO_14_val_FH.txt};

  \end{axis}
  \node at (0.03*\linewidth,0.69*\linewidth) {(g)};
\end{tikzpicture}
\end{minipage}

\hspace{-4mm}\begin{minipage}[b]{\minpagewidth}
  \centering
    \begin{tikzpicture}[/pgfplots/width=1.0\linewidth,/pgfplots/height=0.9\linewidth]
      \begin{axis}[
      xmin=0.5,xmax=3.0,ymin=0.0,ymax=3.0,
      xtick={0.5,1.0,...,3.0},
      ytick={0.0,0.5,...,4.0},
      yticklabel style={/pgf/number format/.cd,fixed,precision=2},
      ylabel shift={-2pt},
      xlabel=Magapixels,
      ylabel=Time in Seconds,
      font=\scriptsize,
      grid=both,
      grid style=dotted,
      major grid style={white!20!black},
      minor grid style={white!70!black}]

      \addplot+[color_SH_E,solid,mark=none,ultra thick]                            table[x=num,y=time] {./fig/bench/Time_Image_Size_SH_E.txt};
      \addplot+[color_SH_D,solid,mark=o,mark size=\marksize,thick]                 table[x=num,y=time] {./fig/bench/Time_Image_Size_SH_D.txt};
      \addplot+[color_LSC,solid,mark=x,mark size=\marksize,thick]                  table[x=num,y=time] {./fig/bench/Time_Image_Size_LSC.txt};
      \addplot+[color_SEEDS,solid,mark=triangle,mark size=\marksize,thick]         table[x=num,y=time] {./fig/bench/Time_Image_Size_SEEDS.txt};
      \addplot+[color_SLIC,solid,mark=diamond,mark size=\marksize,thick]           table[x=num,y=time] {./fig/bench/Time_Image_Size_SLIC.txt};
      \addplot+[color_FH,solid,mark=square,mark size=\marksize,thick]              table[x=num,y=time] {./fig/bench/Time_Image_Size_FH.txt};

      \end{axis}
      \node at (0.03*\linewidth,0.69*\linewidth) {(h)};
    \end{tikzpicture}
  \end{minipage}

}

\vspace{2mm}\scriptsize{\ref{named}}\vspace{2mm}

\caption{\textbf{Segmentation accuracy and efficiency evaluation}: results on BSDS500, Pascal SegVOC12, SBD, and COCO14.
}
\label{fig:benchmark}
\end{center}
\end{figure*}
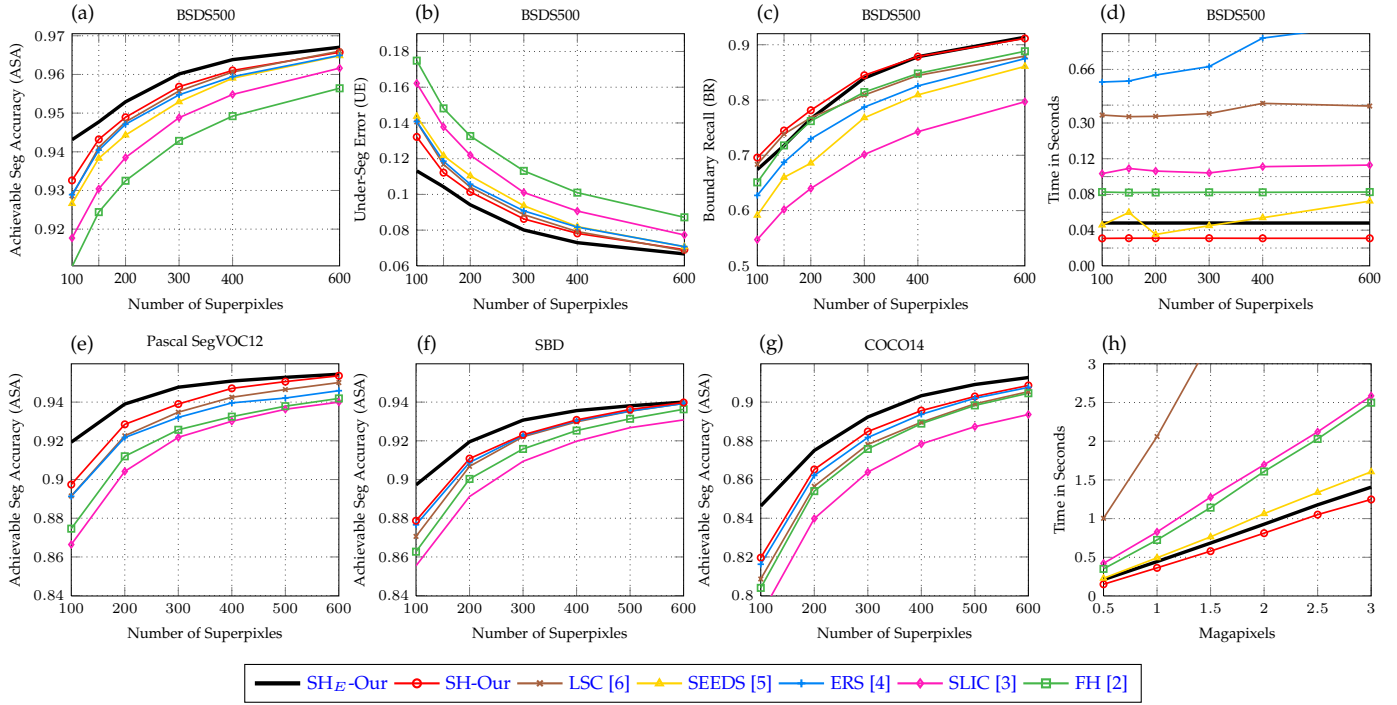

{\flushleft \textbf{Datasets and State of the Art.}}
We conduct experiments in the following four datasets: the Berkeley Segmentation Dataset (BSDS500)~\cite{pami-11-malik}, the segmentation challenge of Pascal 2012 Visual Object Classes (SegVOC12)~\cite{Everingham10}, the Berkeley Semantic Boundaries Dataset (SBD)~\cite{Hariharan2011}, and the Microsoft Common Objects in Context (COCO14)~\cite{Lin2014}.
BSDS500 is a common testbed for image segmentation with accurate annotated segments and boundaries. We perform throughout evaluation on it.
Pascal SegVOC12, SBD, and COCO14 are popular for object segmentation but do not have accurate boundaries. We test on them for object segmentation.

We compare our results against the following superpixel methods:
FH~\cite{felzenszwalb2004efficient}\footnote{\url{http://cs.brown.edu/~pff/segment/}},
SLIC~\cite{achanta2012slic}\footnote{\url{http://ivrl.epfl.ch/research/superpixels/}},
ERS~\cite{liu2011entropy}\footnote{\url{http://mingyuliu.net/}},
SEEDS~\cite{van2012seeds}\footnote{\url{http://www.mvdblive.org/seeds/}} and
LSC~\cite{LiC15}\footnote{\url{http://jschenthu.weebly.com/}}, using the implementations from the respective authors.
FH and SLIC are widely used in the literature because of their simplicity and efficiency.
ERS and LSC are considered as the state-of-the-art in terms of accuracy but are computationally expensive,
and SEEDS is the most efficient one among these five methods.
We evaluate two versions of proposed scheme:
\begin{itemize}
  \item \textbf{$\mbox{SH}$}  only uses color feature, and
  \item \textbf{$\mbox{SH}_E$} combines both color and edge features.
\end{itemize}

{\flushleft \textbf{Parameter Settings.}}
Our color difference is measured by the ${\chi ^2}$ distance of color histograms (equally divided into $k$ bins) after $j$ iterations.
These parameters are set based on a training database and fixed as $\{j, k\} = \{4, 20\}$ in our experiments.
For other methods, the default parameters published by the authors are used to ensure a fair comparison.
%
%

\subsection{Segmentation Accuracy}

{\flushleft \textbf{Benchmark Metrics.}}
We adopted the widely-used metrics to evaluate superpixel segmentation
methods including boundary recall, under-segmentation error \cite{neubert2012superpixel} and achievable segmentation accuracy \cite{liu2011entropy, van2012seeds}.


\begin{asparaitem}

\item{Achievable Segmentation Accuracy (ASA)}: gives the upper-bound segmentation accuracy.
It measures the fraction of ground truth segment that is correctly labeled by superpixels
\vspace{-1mm}
\begin{equation}
ASA(\mathcal{S}) = \frac{{\sum\nolimits_k {{{\max }_i}\left| {{s_k} \cap {g_i}} \right|} }}{{\sum\nolimits_i {{\left| {{g_i}} \right|}} }},
\vspace{-1mm}
\end{equation}
where $g_i$ is a ground truth segment, $s_k$ is a superpixel and $\left|  \cdot  \right|$ indicates the size of the segment.

\item{Under-segmentation Error (UE)}: compares
superpixel segment areas to measure to what extent superpixels cover
the ground truth segment border
\vspace{-1mm}
\begin{equation}
UE(\mathcal{S}) = \frac{{\sum\nolimits_i {\sum\nolimits_k {\min (\left| {{s_k} \cap {g_i}} \right|,\left| {{s_k} - {g_i}} \right|)} } }}{{\sum\nolimits_i {\left| {{g_i}} \right|} }}.
\vspace{-1mm}
\end{equation}

\item{Boundary Recall (BR)}: measures the percentage of
ground truth edges fall within superpixel boundaries with a tolerance
distance $\varepsilon = 2$.
Given a ground truth boundary union sets $\mathcal{B}(g)$ and the
superpixel boundary  sets $\mathcal{B}(s)$, the boundary recall of a segmentation $\mathcal{S}$ is defined by
    \begin{equation}
    BR(\mathcal{S}) = \frac{{TP(\mathcal{S})}}{{TP(\mathcal{S}) + FN(\mathcal{S})}},
    \end{equation}
    where $TP(\mathcal{S})$ is the number of boundary pixels in
    $\mathcal{B}(g)$ that fall within a boundary pixel
    $\mathcal{B}(s)$ in the range $\varepsilon$, and $FN(\mathcal{S})$ is
    the contrary case.

\end{asparaitem}

{\flushleft \textbf{Experimental Results.}}
Figure \ref{fig:benchmark}(a)-(c) present the quantitative evaluation results under the three metrics on the BSDS500.
%
As can be seen, the performance of proposed $\mbox{SH}$ and $\mbox{SH}_E$ methods~(\ref{curve:sh} and \ref{curve:sh_e}) are the highest under all three metrics.
$\mbox{SH}$ outperforms LSC~(\ref{curve:lsc}) and ERS~(\ref{curve:ers}) even though that SH generates all scales of superpixels simultaneously and is extremely faster.
With the assistance of SFE~\cite{DollarICCV13edges}, $\mbox{SH}_E$ outperforms others significantly.
The results are similar on other three object datasets, as shown in Figure \ref{fig:benchmark}(e)-(g).
The ASA on marked objects~(i.e. ignoring background) is employed for evaluation.
SegVOC12 has the most detailed annotation among these three object datasets and SH shows more advantages over LSC and ERS on it.
SLIC~(\ref{curve:slic}) performs unfavorably to others because of its regular sampling strategy (see Figure~\ref{fig:sh_slic}).
LSC also starts with regular sampling but maps pixels into a high dimensional feature space.
This helps LSC to capture the global image structure, but the segmentation is also suggestible by image quality (see Figure~\ref{fig:semantic_comparison}).
A comparison of SH with several edge detection methods is presented in Table~\ref{tab:edge}.
Recent edge detectors like HED~\cite{xie15hed} further improve segmentation accuracy.
We choose SFE in this paper because it has real-time performance~\cite{DollarICCV13edges} and efficiency is crucial for superpixels.

\begin{figure*}[!t]
\begin{center}
\def\sheight{0.119\linewidth}
\setlength{\fboxrule}{0.5pt}
\setlength{\fboxsep}{0in}
\def\arraystretch{0.5}
\renewcommand{\tabcolsep}{0.5 pt}
\begin{tabular}{cccccccccccc}

\fbox{\includegraphics[height=\sheight]{./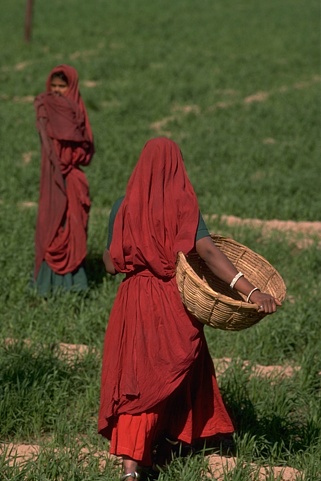}}       &
\fbox{\includegraphics[height=\sheight]{./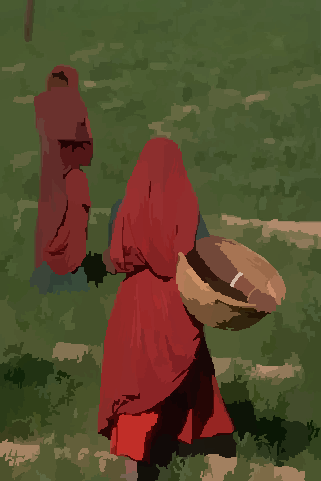}}   &
\fbox{\includegraphics[height=\sheight]{./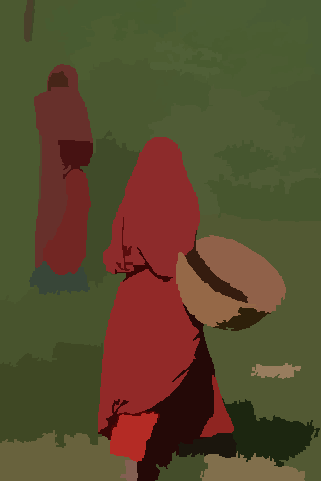}}    &
\fbox{\includegraphics[height=\sheight]{./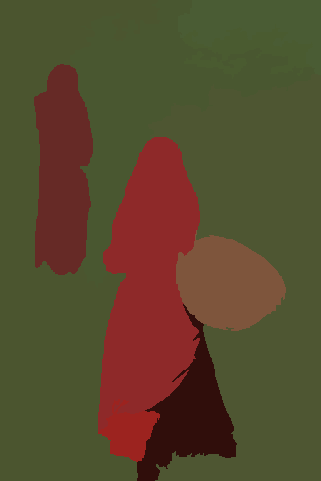}}    &
\fbox{\includegraphics[height=\sheight]{./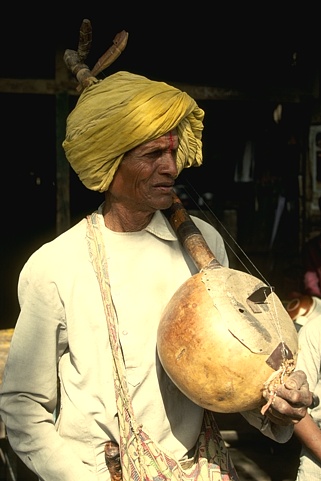}}      &
\fbox{\includegraphics[height=\sheight]{./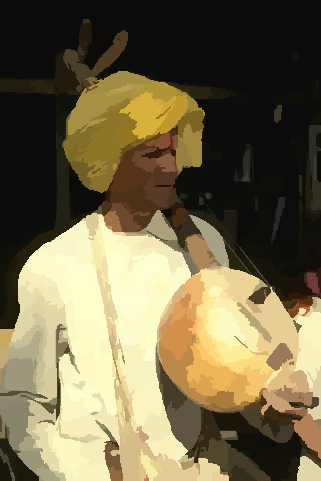}}  &
\fbox{\includegraphics[height=\sheight]{./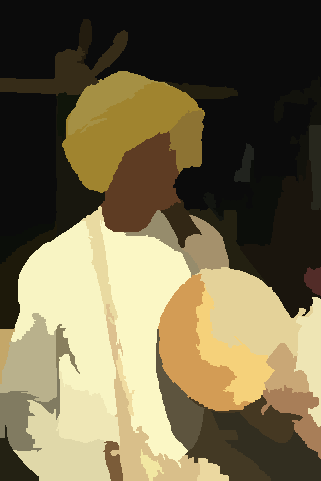}}   &
\fbox{\includegraphics[height=\sheight]{./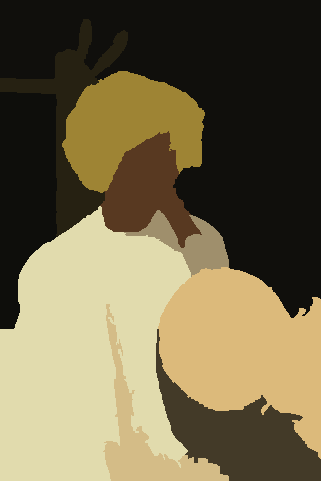}}   &
\fbox{\includegraphics[height=\sheight]{./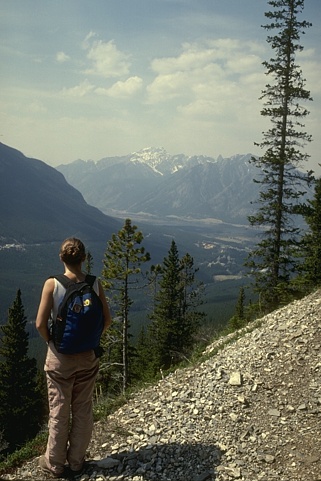}}      &
\fbox{\includegraphics[height=\sheight]{./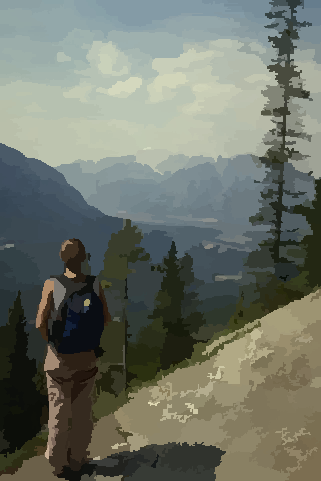}}  &
\fbox{\includegraphics[height=\sheight]{./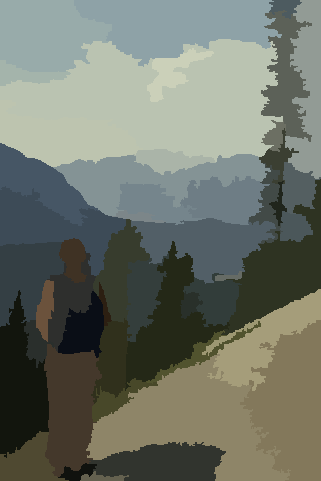}}   &
\fbox{\includegraphics[height=\sheight]{./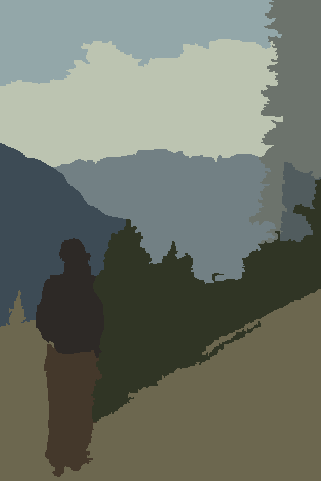}}   \\

\fbox{\includegraphics[height=\sheight]{./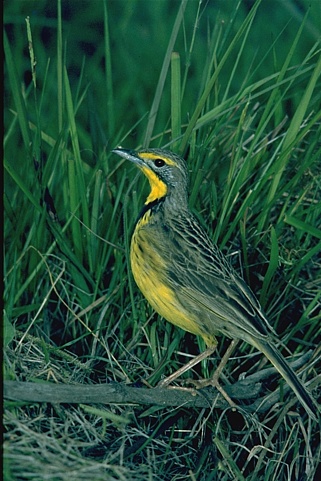}}      &
\fbox{\includegraphics[height=\sheight]{./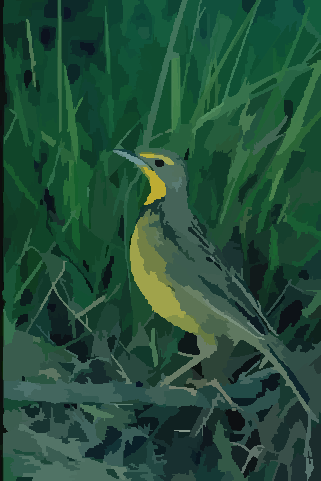}}  &
\fbox{\includegraphics[height=\sheight]{./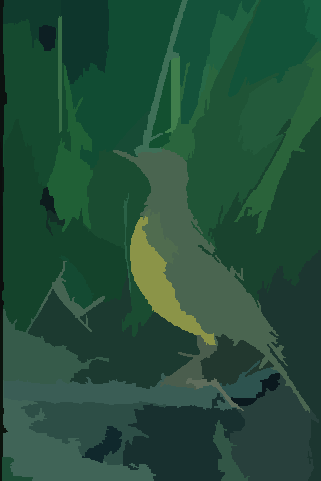}}   &
\fbox{\includegraphics[height=\sheight]{./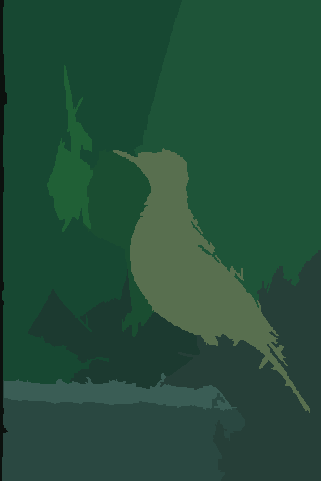}}   &
\fbox{\includegraphics[height=\sheight]{./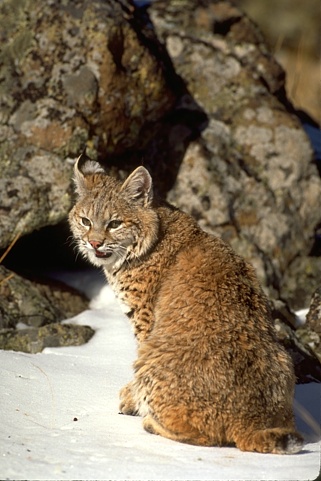}}      &
\fbox{\includegraphics[height=\sheight]{./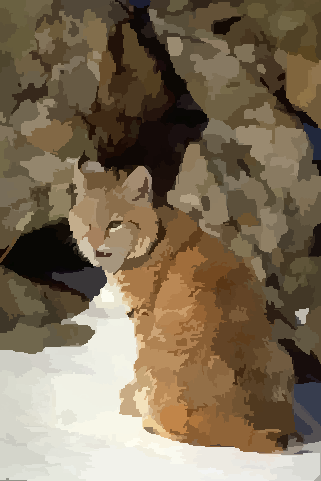}}  &
\fbox{\includegraphics[height=\sheight]{./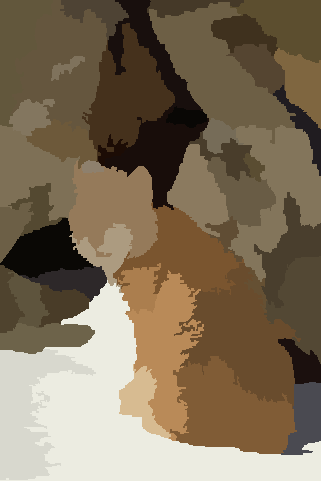}}   &
\fbox{\includegraphics[height=\sheight]{./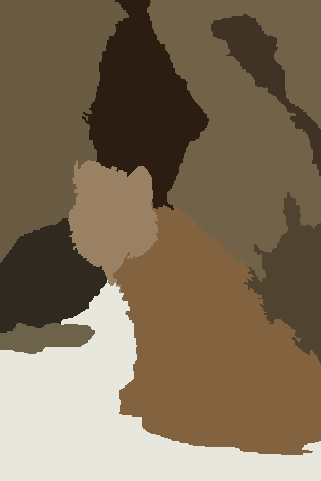}}   &
\fbox{\includegraphics[height=\sheight]{./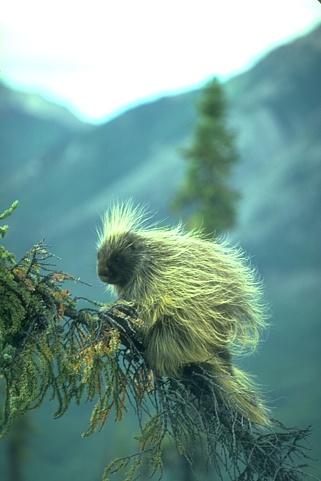}}      &
\fbox{\includegraphics[height=\sheight]{./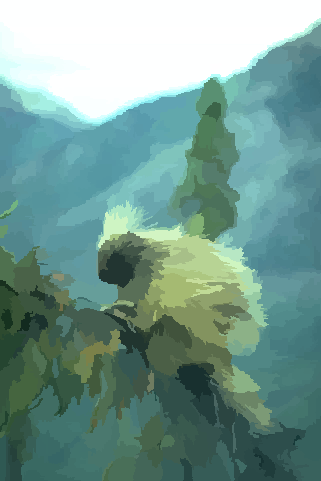}}  &
\fbox{\includegraphics[height=\sheight]{./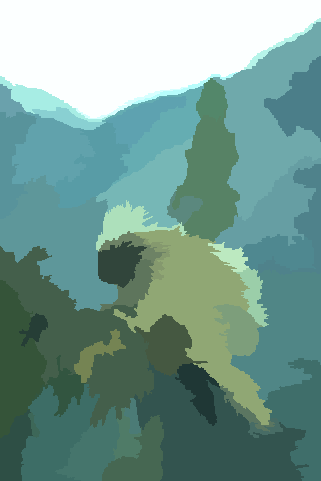}}   &
\fbox{\includegraphics[height=\sheight]{./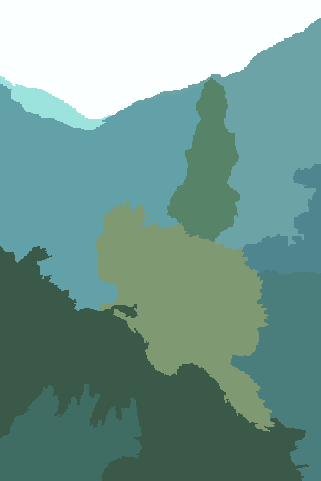}}   \\
\end{tabular}

\def\swidth{0.121\linewidth}
\setlength{\fboxrule}{0.5pt}
\setlength{\fboxsep}{0in}
\def\arraystretch{0.5}
\renewcommand{\tabcolsep}{0.5 pt}
\begin{tabular}{cccccccc}
\fbox{\includegraphics[width=\swidth]{./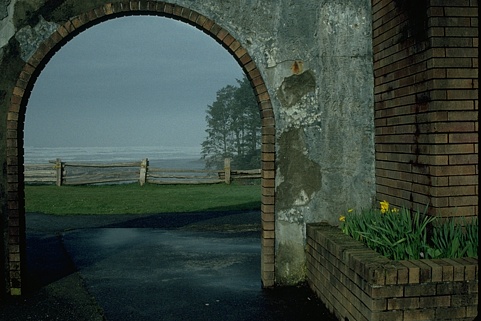}}        &
\fbox{\includegraphics[width=\swidth]{./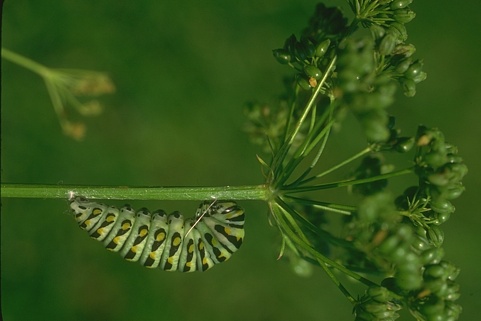}}       &
\fbox{\includegraphics[width=\swidth]{./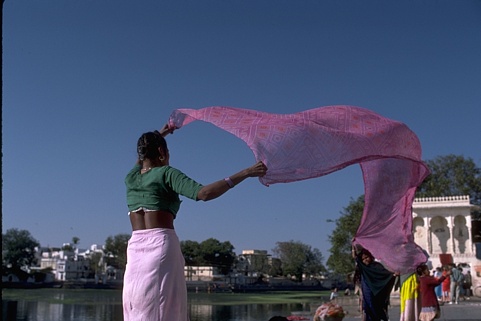}}       &
\fbox{\includegraphics[width=\swidth]{./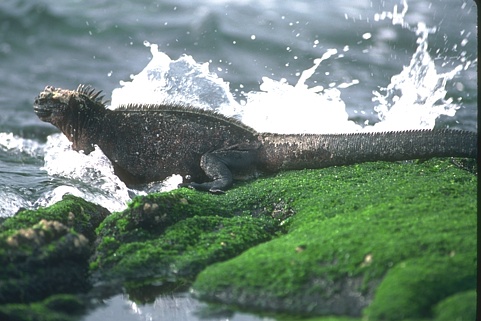}}      &
\fbox{\includegraphics[width=\swidth]{./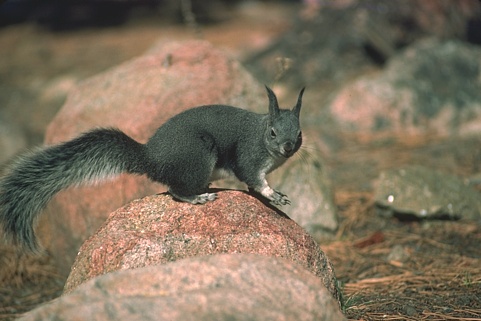}}      &
\fbox{\includegraphics[width=\swidth]{./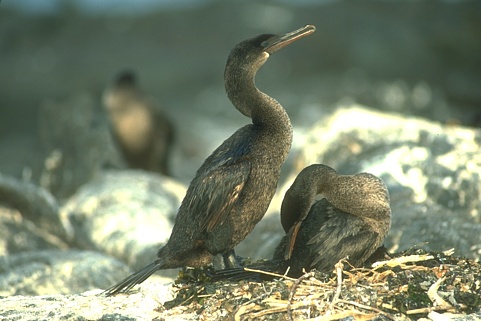}}      &
\fbox{\includegraphics[width=\swidth]{./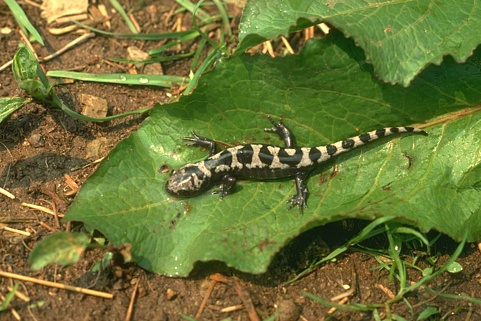}}      &
\fbox{\includegraphics[width=\swidth]{./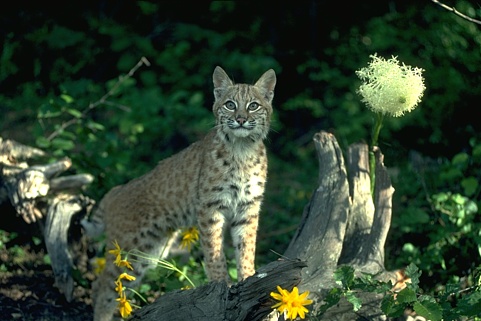}}      \\

\fbox{\includegraphics[width=\swidth]{./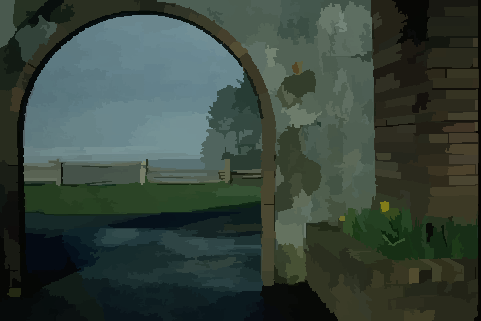}}      &
\fbox{\includegraphics[width=\swidth]{./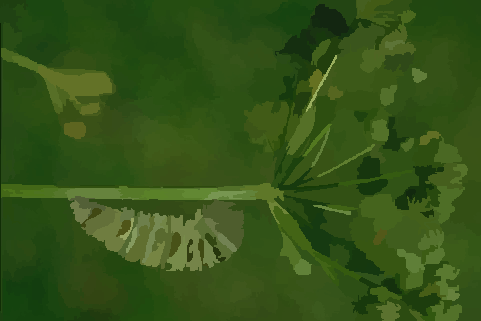}}     &
\fbox{\includegraphics[width=\swidth]{./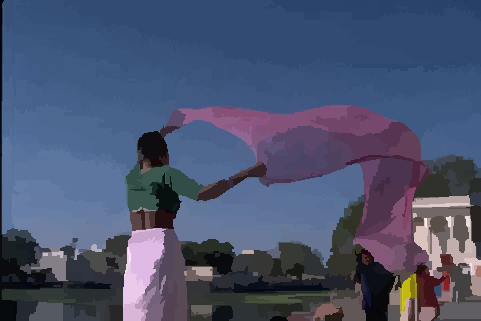}}     &
\fbox{\includegraphics[width=\swidth]{./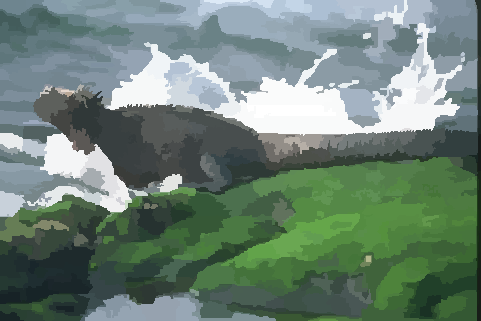}}    &
\fbox{\includegraphics[width=\swidth]{./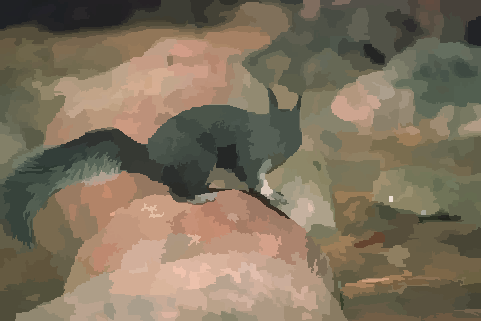}}    &
\fbox{\includegraphics[width=\swidth]{./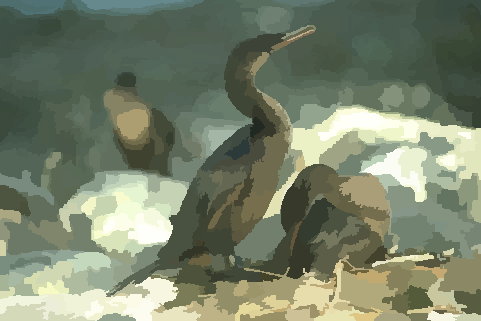}}    &
\fbox{\includegraphics[width=\swidth]{./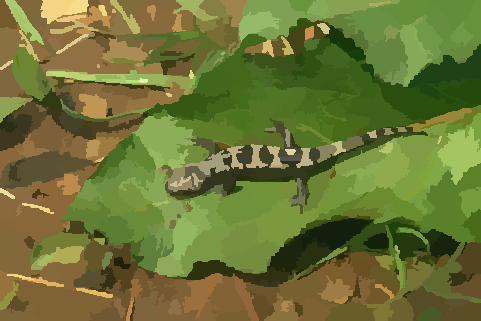}}    &
\fbox{\includegraphics[width=\swidth]{./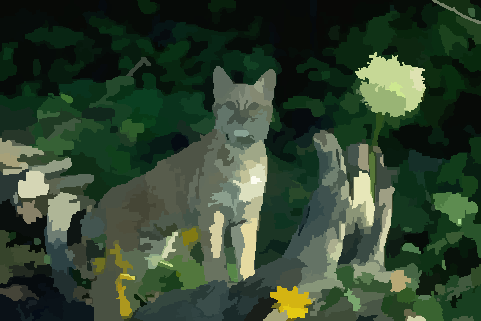}}   \\

\fbox{\includegraphics[width=\swidth]{./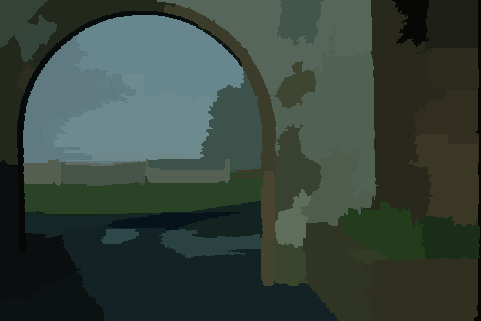}}      &
\fbox{\includegraphics[width=\swidth]{./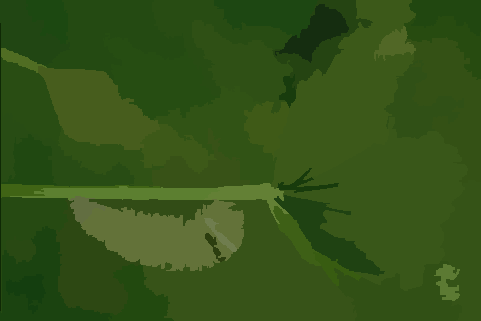}}     &
\fbox{\includegraphics[width=\swidth]{./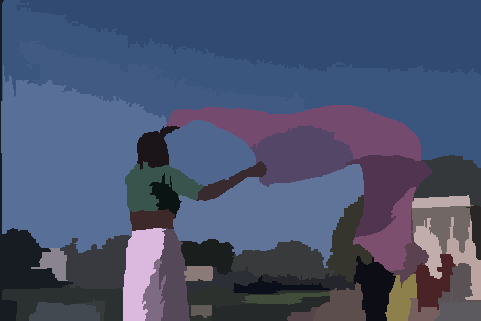}}     &
\fbox{\includegraphics[width=\swidth]{./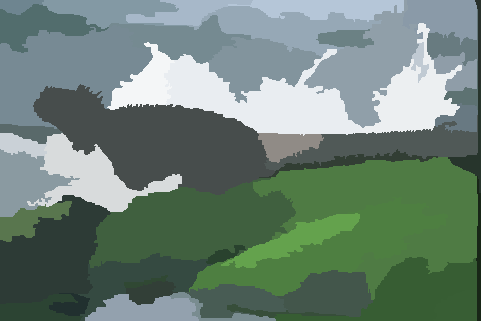}}    &
\fbox{\includegraphics[width=\swidth]{./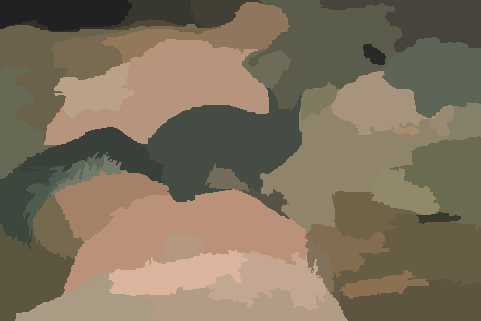}}    &
\fbox{\includegraphics[width=\swidth]{./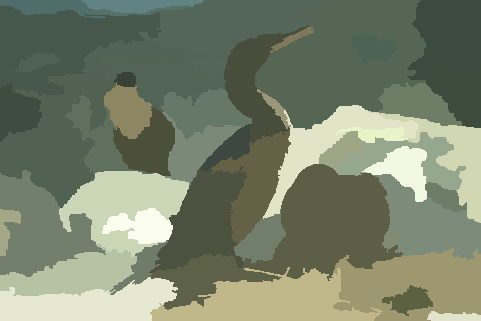}}    &
\fbox{\includegraphics[width=\swidth]{./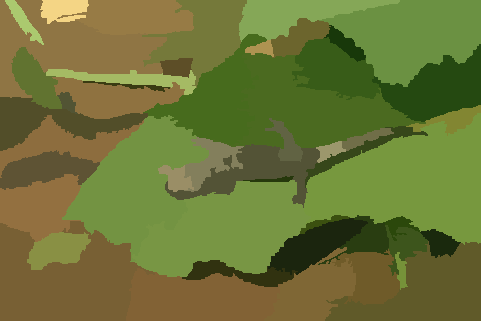}}    &
\fbox{\includegraphics[width=\swidth]{./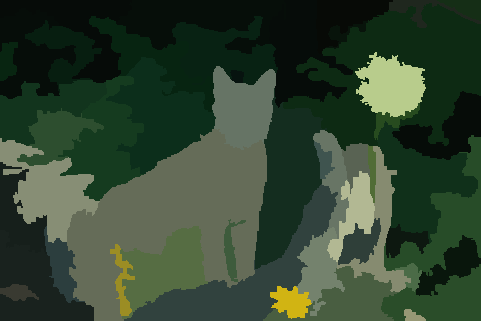}}   \\

\fbox{\includegraphics[width=\swidth]{./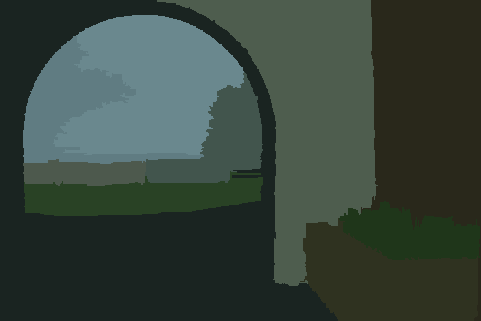}}      &
\fbox{\includegraphics[width=\swidth]{./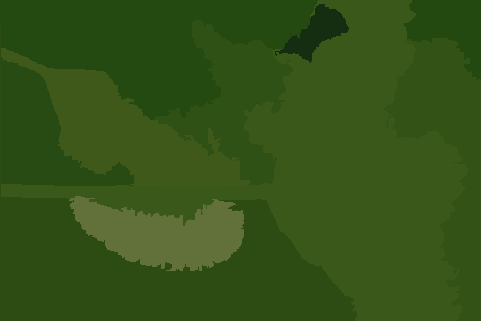}}     &
\fbox{\includegraphics[width=\swidth]{./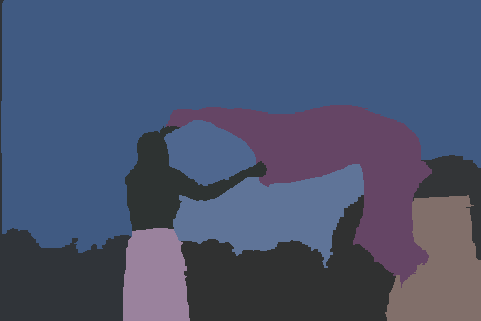}}     &
\fbox{\includegraphics[width=\swidth]{./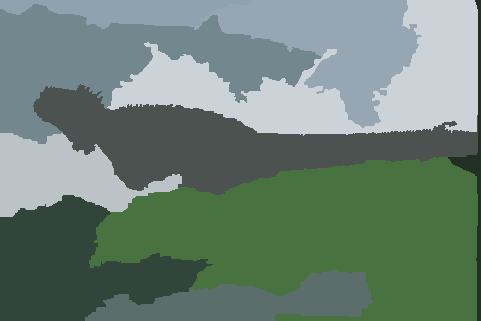}}    &
\fbox{\includegraphics[width=\swidth]{./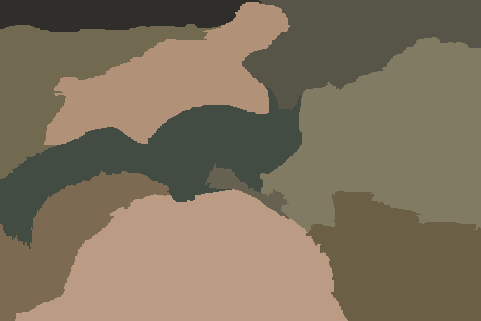}}    &
\fbox{\includegraphics[width=\swidth]{./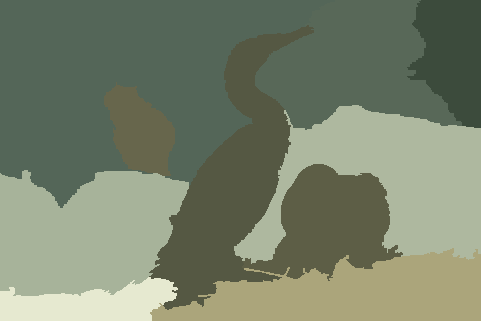}}    &
\fbox{\includegraphics[width=\swidth]{./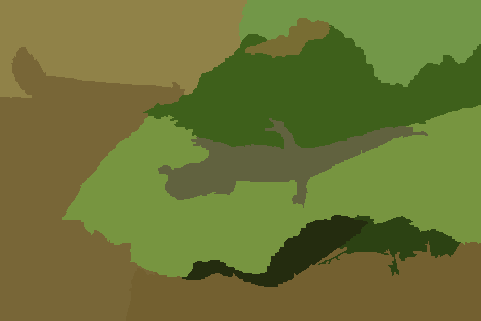}}    &
\fbox{\includegraphics[width=\swidth]{./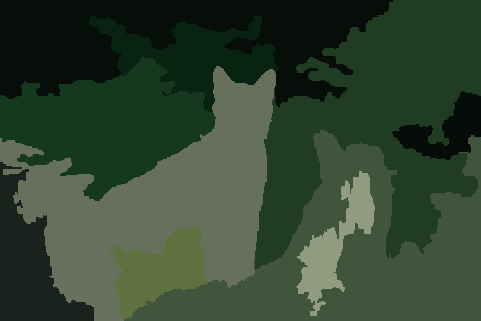}}   \\
\end{tabular}

\end{center}
\caption{\textbf{Super hierarchy on the BSDS500 test set}: input image, hierarchical segmentation with 600, 50 and 10 superpixels. }
\label{fig:super_hierarchy}
\end{figure*}

\begin{table}
\centering
\begin{tabular}{l|ccc}
\cmidrule[\heavyrulewidth](l{-2pt}){2-4}
    & mean ASA & mean UE & mean BR \\
\midrule
SH+SCG\,~\cite{ren_nips12}             & 95.4\% & 9.1\% & 78.4\% \\
SH+SFE\,\,\,~\cite{DollarICCV13edges}  & 95.5\% & 8.9\% & 79.8\% \\
SH+HED~\cite{xie15hed}                 & \textbf{95.7\%} & \textbf{8.6\%} & \textbf{80.2\%} \\
\bottomrule
\end{tabular}
\vspace{2mm}
\caption{\textbf{SH with several edge detectors}: results on BSDS500.}
\label{tab:edge}
\end{table}

\begin{figure}\scriptsize
\centering
\def\swidth{0.19\linewidth}
\setlength{\fboxrule}{0.5pt}
\setlength{\fboxsep}{0in}
\def\arraystretch{0.5}
\renewcommand{\tabcolsep}{0.5 pt}
\begin{tabular}{ccccc}
\fbox{\includegraphics[width=\swidth]{./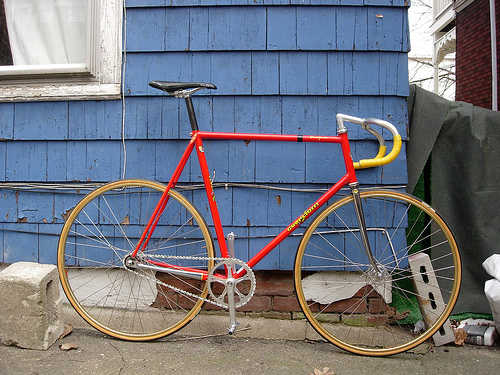}}                  &
\fbox{\includegraphics[width=\swidth]{./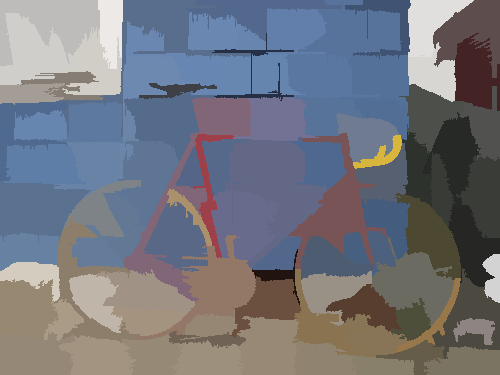}} &
\fbox{\includegraphics[width=\swidth]{./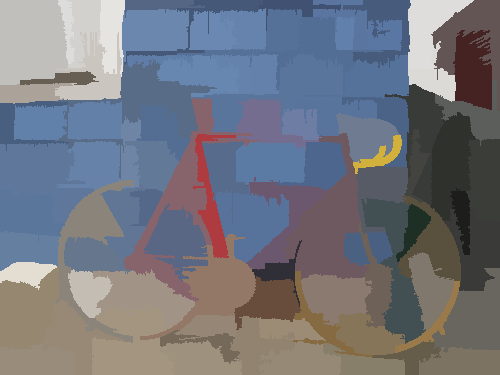}}  &
\fbox{\includegraphics[width=\swidth]{./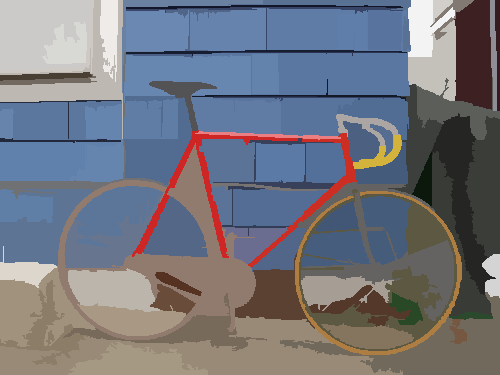}}         &
\fbox{\includegraphics[width=\swidth]{./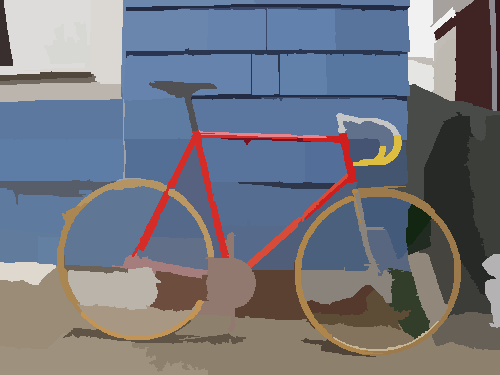}}         \\

\fbox{\includegraphics[width=\swidth]{./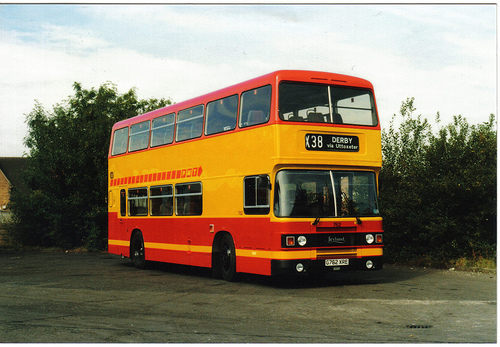}}                  &
\fbox{\includegraphics[width=\swidth]{./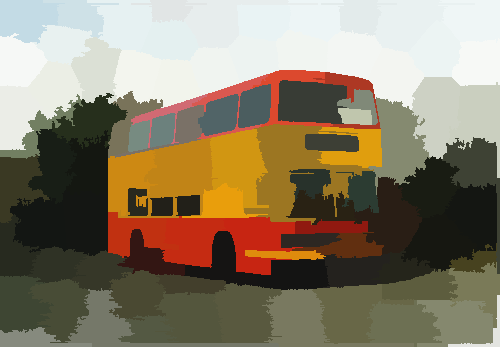}} &
\fbox{\includegraphics[width=\swidth]{./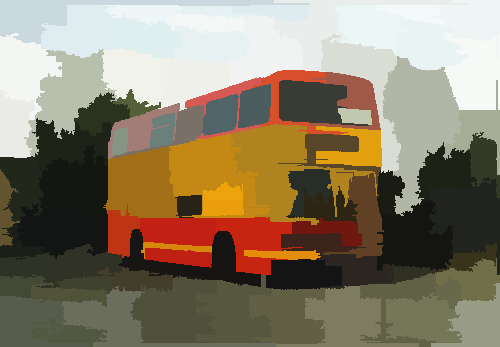}}  &
\fbox{\includegraphics[width=\swidth]{./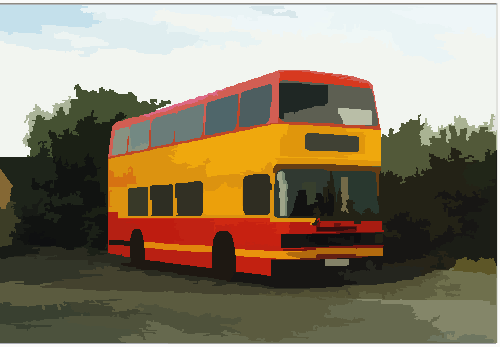}}         &
\fbox{\includegraphics[width=\swidth]{./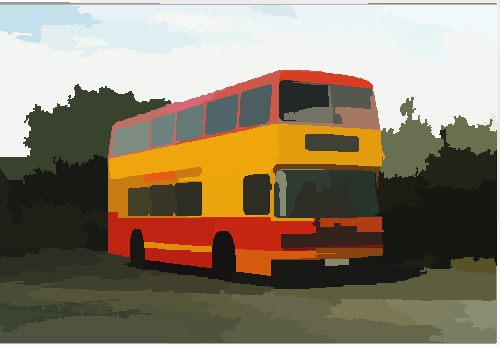}}         \\

\fbox{\includegraphics[width=\swidth]{./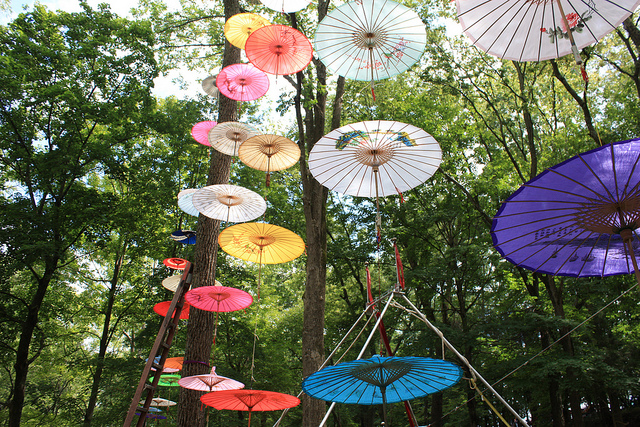}}                  &
\fbox{\includegraphics[width=\swidth]{./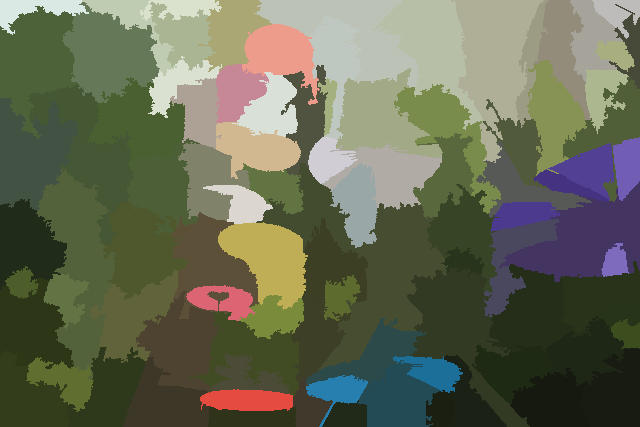}} &
\fbox{\includegraphics[width=\swidth]{./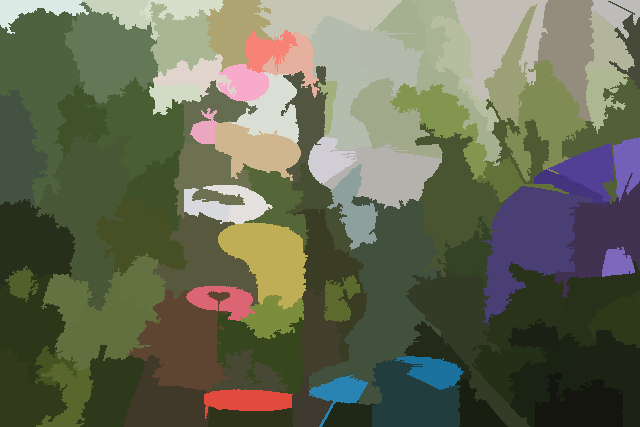}}  &
\fbox{\includegraphics[width=\swidth]{./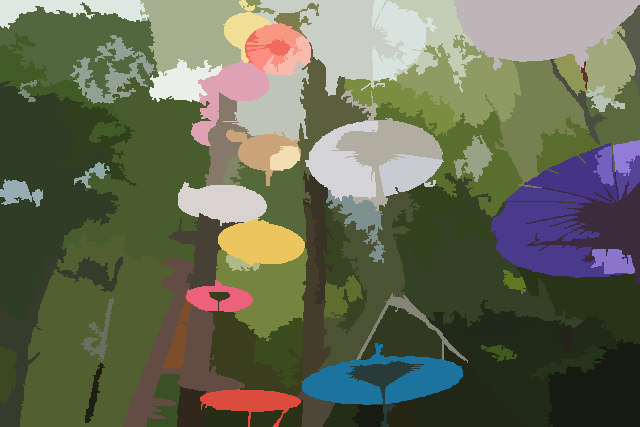}}         &
\fbox{\includegraphics[width=\swidth]{./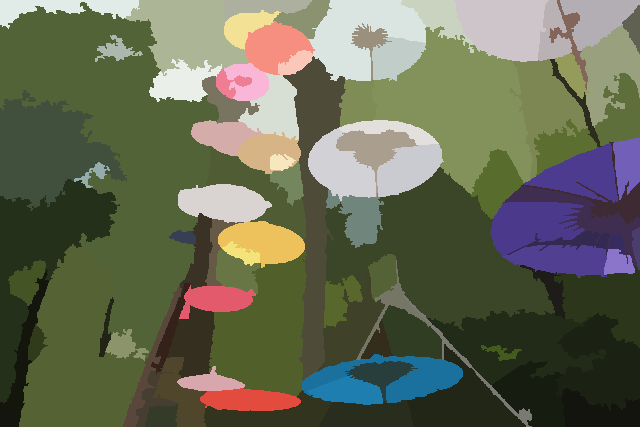}}         \\

\\ input & SLIC($m=10$) & SLIC($m=2$) & $\mbox{SH}$ & $\mbox{SH}_E$ \vspace{1mm}
\end{tabular}
\caption{\textbf{SH compared to SLIC with 100 superpixels}: results on Pascal segVOC12, SBD, and COCO14.
SLIC performs poorly on fine-structured objects, reducing the compact factor $m$ still get unsatisfactory results.
$\mbox{SH}_E$ improves $\mbox{SH}$ by leveraging edge features. It's unclear how to incorporate edges into SLIC.}
\label{fig:sh_slic}
\end{figure}

\begin{table*}[!tb]
\begin{center}
\scalebox{0.8}{
\def\arraystretch{1.3}
\renewcommand{\tabcolsep}{3.5 pt}
\begin{tabular}{l|ccccccccccccccccccccc|ccc}
\midrule[\heavyrulewidth]
 & \rotatebox{0}{Build.} & \rotatebox{0}{Grass} & \rotatebox{0}{Tree} & \rotatebox{0}{Cow} & \rotatebox{0}{Sheep} & \rotatebox{0}{Sky} & \rotatebox{0}{Air.} & \rotatebox{0}{Water} & \rotatebox{0}{Face} & \rotatebox{0}{Car} & \rotatebox{0}{Bicycle} & \rotatebox{0}{Flower} & \rotatebox{0}{Sign} & \rotatebox{0}{Bird} & \rotatebox{0}{Book} & \rotatebox{0}{Chair} & \rotatebox{0}{Road} & \rotatebox{0}{Cat} & \rotatebox{0}{Dog} & \rotatebox{0}{Body} & \rotatebox{0}{Boat} & \rotatebox{0}{\textbf{Global}} & \rotatebox{0}{\textbf{Average}}\\
\hline
FH\cite{felzenszwalb2004efficient}  & 77.9 & 92.4 & 88.1 & 75.8 & 75.4 & 89.3	& 48.5  & 58.5 & 82.6	& 60.0 & 82.2 & 45.0 & 72.6	& 15.2  & 88.0 & 45.0	& 85.0 & 46.3 & 36.6 & 70.3	& 04.7  & 78.1\% & 63.8\% \\
SLIC\cite{achanta2012slic}          & 78.0 & 93.6	& 83.8 & 87.9 & 74.4 & 92.6 & 46.8  & 68.6 & 84.6	& 57.7 & 76.5 & 67.3 & 57.9	& 19.3  & 92.5 & 40.6	& 82.9 & 60.8 & 42.9 & 57.6	& 07.5  & 79.5\% & 65.4\% \\
ERS\cite{liu2011entropy}            & 78.9 & 92.5	& 85.5 & 83.6 & 59.1 & 95.5	& 68.6  & 67.9 & 86.4	& 54.7 & 74.1 & 53.1 & 69.8	& 25.0  & 93.2 & 37.0	& 84.1 & 51.3 & 48.6 & 52.7	& 13.8  & 79.6\% & 65.5\% \\
SEEDS\cite{van2012seeds}            & 77.1 & 92.7 & 88.3 & 81.8 & 71.8 & 95.3 & 57.2  & 70.2 & 82.5 & 53.7 & 76.6 & 63.5 & 67.0 & 22.8  & 94.3 & 38.4 & 85.5 & 50.0 & 48.0 & 55.8 & 10.4  & 80.3\% & 65.8\% \\
LSC\cite{LiC15}  & 81.5 & 93.4 & 84.4 & 84.7 & 78.0 & 93.4 & 46.4 & 73.2 & 83.7 & 52.7 & 77.4 & 73.8 & 60.0 & 16.3 & 92.5 & 25.2 & 85.9 & 51.5 & 48.1 & 57.8 & 11.4 & \textbf{80.5\%} & 65.3\% \\
SH                                  & 80.6 & 92.9 & 84.9 & 86.8 & 70.6 & 92.1 & 58.0  & 69.0 & 83.2 & 59.3 & 80.0 & 65.5 & 79.3 & 14.2  & 86.4 & 42.0 & 85.5 & 47.3 & 52.1 & 58.3 & 10.1  & 80.4\% & \textbf{66.6\%} \\

\midrule[\heavyrulewidth]
\end{tabular}}
\end{center}
\caption{\textbf{Semantic segmentation accuracy.} Using the method of~\cite{gould2008multi} on the MSRC-21~\cite{shotton2006textonboost}. The global score gives the percentage of correctly classified pixels and the average score provides the per-class average \cite{gonfaus2010harmony}.
The global scores of SEEDS, LSC, and proposed SH are similar while SH improves the per-class accuracy significantly. }
\label{tab:semantic}
\end{table*}

\subsection{Computational Efficiency}

\begin{figure}\scriptsize
\begin{center}
\def\arraystretch{0.6}
\renewcommand{\tabcolsep}{0.5pt}
\setlength{\fboxrule}{0.5pt}
\setlength{\fboxsep}{0pt}
\def\swidth{0.24\linewidth}
\begin{tabular}{cccc}

\fbox{\includegraphics[width=\swidth]{./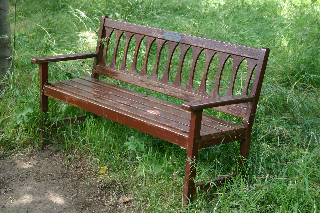}}               &
\fbox{\includegraphics[width=\swidth]{./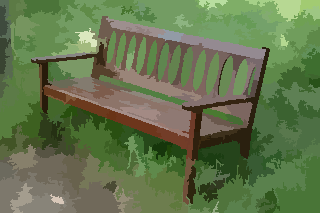}}     &
\fbox{\includegraphics[width=\swidth]{./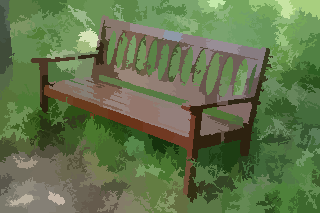}}      &
\fbox{\includegraphics[width=\swidth]{./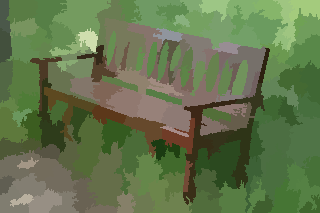}}     \\
\fbox{\begin{overpic}[width=\swidth,unit=1mm]{./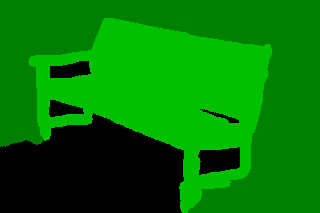}
\put(30,40){\color{white}{ chair}}
\put(75, 5){\color{white}{ grass}}
\put(8,  8){\color{white}{ void}}
\end{overpic}} &


\fbox{\includegraphics[width=\swidth]{./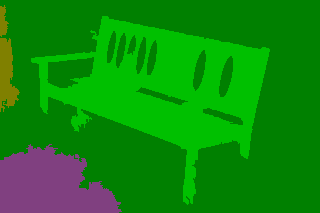}} &
\fbox{\includegraphics[width=\swidth]{./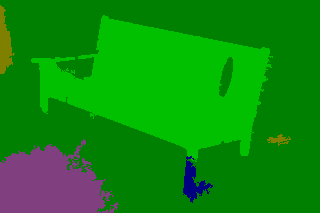}} &
\fbox{\includegraphics[width=\swidth]{./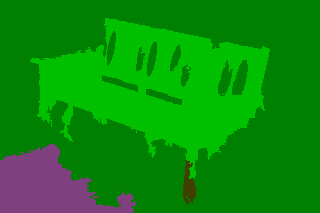}} \\

\\(a) input/GT & (b) SH & (c) LSC\cite{LiC15} & (d) SLIC\cite{achanta2012slic} \\
\end{tabular}
\end{center}

\caption{\textbf{Semantic segmentation examples.} \emph{Top}: input image and segmentation with 200 superpixels. \emph{Bottom}: ground truth (GT) and classification results for the method of \cite{gould2008multi} with different superpixel algorithms.
}
\label{fig:semantic_comparison}
\end{figure}

We evaluate the computational efficiency in two respects.
The run-time w.r.t the  number of superpixels and image size.
All algorithms are tested on a single 3.4 GHz i7 CPU.
We first analyze the complexity of each algorithm in theory and then
present empirical results.

{\flushleft \textbf{Theoretical Complexity.}}
FH uses Kruskal's algorithm \cite{west2001introduction} to grow region that runs in $O(n \log n)$ time with low constant factors.
SLIC is a $k$-means clustering procedure with
constrained search region which runs in $O(n)$ time of each iteration
but needs several iterations to convergence.
SEEDS maximizes its energy function via hill-climbing
optimization at two levels of granularity: pixel-level and
block-level.
The run-time of block-level optimization depends on the number of
superpixels.
ERS builds a submodular and monotonic objective function
that can be optimized by lazy greedy.
%
The worst case complexity of lazy greedy algorithms is $O(n^2 \log
n)$ while \cite{liu2011entropy} claims that on average the complexity
of ERS approximates $O(n \log n)$.
LSC shares a similar framework with SLIC and the complexity is also $O(n)$.
LSC is apparently slower than SLIC because it works in high dimensional feature
space and conducts more iterations in order to achieve higher segmentation accuracy.
As analyzed in Section \ref{sec:complexity}, the computational complexity of the proposed SH method is $O(n)$.
Compared to other linear time methods,
our contribution and advantage is that proposed method has $O(1)$ complexity to generate $m$ scales of superpixels while other methods are $O(m)$.

{\flushleft \textbf{Experimental Results.}}
Figure \ref{fig:benchmark}(d) shows the run-time with increasing number
of superpixels on the BSDS500.
%
%
The run-time of proposed SH~(\ref{curve:sh}) and FH~(\ref{curve:fh}) is independent of the amount of superpixels.
The run-time of SLIC~(\ref{curve:slic}) and LSC~(\ref{curve:lsc}) is a bit unstable but constant in general.
LSC is 10$\times$ slower than SH.
SEEDS~(\ref{curve:seeds}) varies significantly and the worst case here is twice slower than the fastest.
ERS~(\ref{curve:ers}) is 20$\times$ slower than SH and the run-time increases w.r.t. the number of superpixels.
Figure \ref{fig:benchmark}(h) shows the run-time w.r.t. image size.
Every set has 10 images and we report the average time.
The average size of superpixels is 1024 for all image sets and all
algorithms.
The result with the ERS algorithm is not plotted due to its high
computational cost.
The FH, SLIC, SEEDS, LSC, and the proposed SH methods all run in time near linear in image size in practice.

\begin{figure}\scriptsize
\begin{center}

\def\arraystretch{0.6}
\renewcommand{\tabcolsep}{0.5pt}
\setlength{\fboxrule}{0.5pt}
\setlength{\fboxsep}{0in}
\def\swidth{0.19\linewidth}
\begin{tabular}{ccccc}
\fbox{\includegraphics[width=\swidth]{./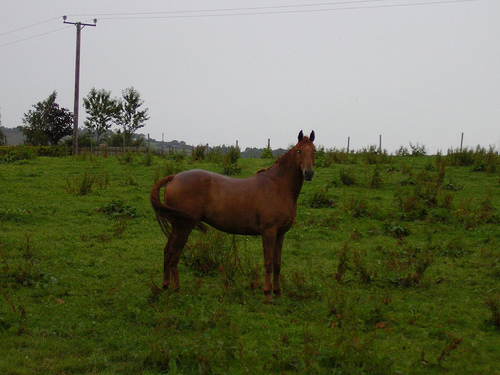}}           &
\fbox{\includegraphics[width=\swidth]{./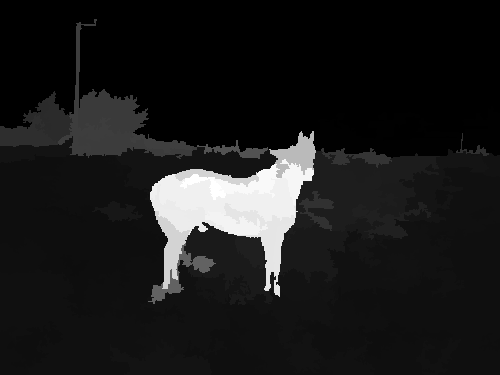}}  &
\fbox{\includegraphics[width=\swidth]{./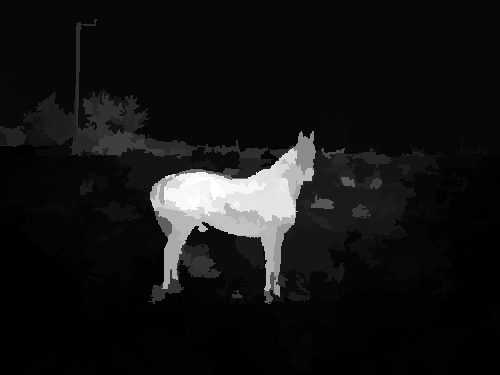}}  &
\fbox{\includegraphics[width=\swidth]{./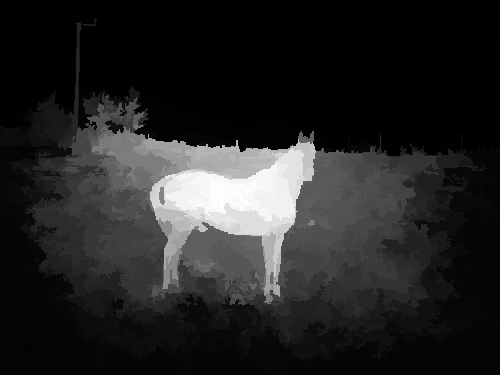}} &
\fbox{\includegraphics[width=\swidth]{./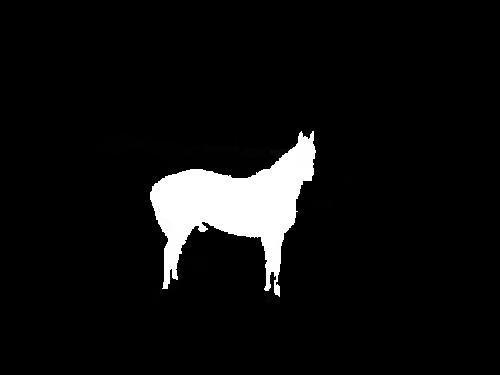}}       \\

\fbox{\includegraphics[width=\swidth]{./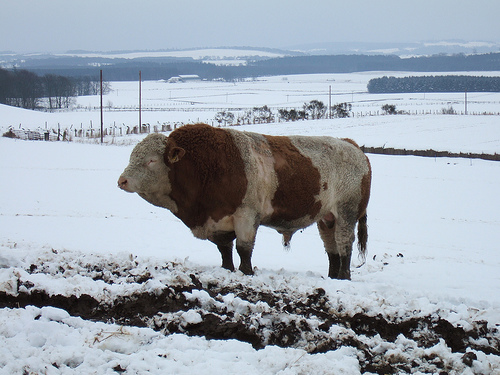}}           &
\fbox{\includegraphics[width=\swidth]{./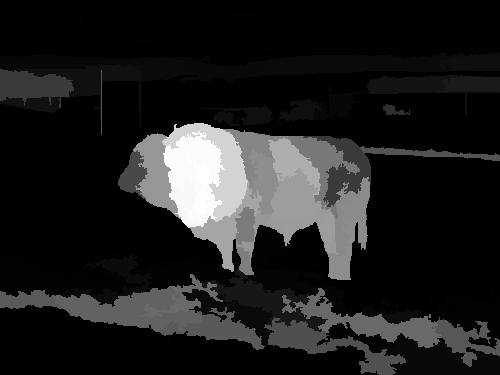}}  &
\fbox{\includegraphics[width=\swidth]{./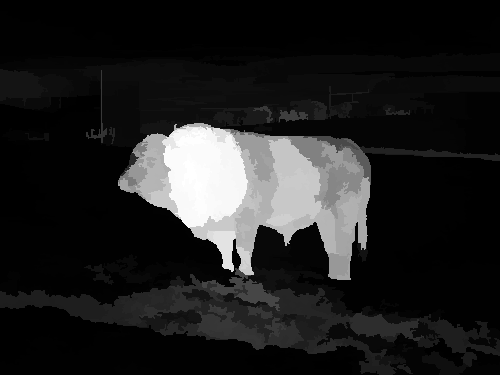}}  &
\fbox{\includegraphics[width=\swidth]{./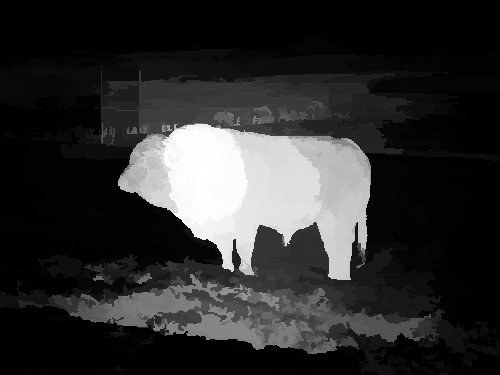}} &
\fbox{\includegraphics[width=\swidth]{./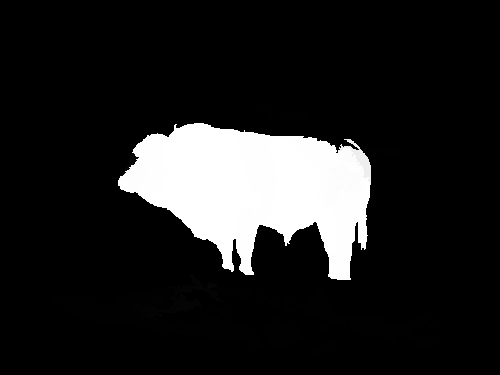}}       \\

\\ input & 100 superpixels & 300 superpixels & 800 superpixels & integrated \\

\end{tabular}
\end{center}
\caption{\textbf{Multi-scale saliency detection}: using \cite{qin2015saliency} with SH.
}
\label{fig:saliency}
\end{figure}

\section{Applications}
\label{sec:application}


To demonstrate the properties of proposed \emph{Super Hierarchy} (SH) method, we show how it is used as a preliminary that impacts three important computer vision tasks: semantic segmentation, saliency detection and stereo matching.



{\flushleft \textbf{Semantic Segmentation.}} Semantic segmentation aims at assigning pre-defined class labels to every pixel in an image.
One of the most successful frameworks for this task models the problem as an energy minimization of a conditional random field (CRF) \cite{gould2008multi,gonfaus2010harmony}. By working directly on the superpixel level instead of the pixel level, the number of nodes in the CRF is significantly reduced (typically from $10^5$ to $10^2$ per image \cite{gonfaus2010harmony}). Therefore, the inference algorithm converges drastically faster \cite{gonfaus2010harmony}.

Following \cite{achanta2012slic}, we use the method of \cite{gould2008multi} to evaluate superpixel algorithms on the MSRC-21 database~\cite{shotton2006textonboost}. The original annotations of MSRC-21 are quite imprecise and in order to get reliable results, we use an accurate version provided by \cite{malisiewicz-bmvc07}. All settings of \cite{gould2008multi} are kept constant for all superpixel methods. The results appearing in Table \ref{tab:semantic} show that SEEDS, LSC and proposed SH perform well on per-class evaluation while SH improves the per-class accuracy significantly. Some examples are shown in Figure \ref{fig:semantic_comparison}.

\begin{table}[t]
\begin{center}
\def\arraystretch{1.0}
\renewcommand{\tabcolsep}{5.0 pt}
\scalebox{0.8}{%
\begin{tabular}{lccc|cc|cc}
\cmidrule[\heavyrulewidth]{2-8}
\multirow{3}{1.5cm}{} & \multicolumn{7}{c}{Mean Absolute Error} \\
\cmidrule{2-8}
& \multicolumn{3}{c|}{PASCAL-S\cite{li2014secrets}} & \multicolumn{2}{c|}{ECSSD\cite{yan2013hierarchical}} & \multicolumn{2}{c}{DUT-OMRON\cite{yang2013saliency}} \\
& Single & Multi & Time & Single & Multi & Single & Multi \\
\midrule
FH~\cite{felzenszwalb2004efficient} & 0.227 & 0.187 & 1810 ms & 0.186 & 0.141 & 0.177 & 0.154 \\
SLIC~\cite{achanta2012slic}         & 0.225 & 0.190 & \,\,\,554 ms & 0.183 & 0.139 & 0.191 & 0.169 \\
ERS~\cite{liu2011entropy}           & 0.224 & 0.189 & 4160 ms & 0.182 & 0.138 & 0.189 & 0.166 \\
SEEDS~\cite{van2012seeds}           & 0.223 & 0.186 & \,\,\,282 ms & 0.179 & 0.137 & 0.178 & 0.155 \\
LSC~\cite{LiC15}                    & 0.226 & 0.190 & 1720 ms & 0.185 & 0.139 & 0.195 & 0.167 \\
SH                                  & \textbf{0.217} & \textbf{0.181} & \textbf{\,\,\,\,\,\,35 ms} & \textbf{0.178} & \textbf{0.133} & \textbf{0.172} & \textbf{0.146} \\

\bottomrule
\end{tabular}}
\end{center}

\caption{\textbf{Multi-scale segmentation for saliency detection}: using \cite{qin2015saliency} with different superpixel algorithms.
The multi-scale method integrates 5 scales of superpixels. Note that multi-scale superpixels clearly and consistently improve saliency detection for all methods.
SH performs best both in single-scale and multi-scale cases. Additionally, SH is significantly faster than others since all scales of superpixels are generated at one time. It is 8$\times$ faster than the second best (SEEDS) on PASCAL-S. }
\label{tab:saliency}
\end{table}

\begin{table*}[!htb]
\def\arraystretch{1.3}
\renewcommand{\tabcolsep}{2.0 pt}
\begin{center}
\scalebox{0.9}{%
\begin{tabular}{l|ccccccccccccccccccc|cc}
\midrule[\heavyrulewidth]
 & \rotatebox{60}{Tsukuba} & \rotatebox{60}{Venus} & \rotatebox{60}{Teddy} & \rotatebox{60}{Cones} & \rotatebox{60}{Aloe} & \rotatebox{60}{Art} & \rotatebox{60}{Baby1} & \rotatebox{60}{Baby2} & \rotatebox{60}{Baby3} & \rotatebox{60}{Books} & \rotatebox{60}{Cloth2} & \rotatebox{60}{Cloth3} & \rotatebox{60}{Dolls} & \rotatebox{60}{Flower.} & \rotatebox{60}{Lamp.} & \rotatebox{60}{Laundry} & \rotatebox{60}{Midd1} & \rotatebox{60}{Moebius} & \rotatebox{60}{Wood1} & \rotatebox{60}{Avg.Error} & \rotatebox{60}{Avg.Rank} \\
 \hline
 MST\cite{yang2012non}   & $\textbf{1.71}_1$ & $\textbf{0.64}_1$ & $7.14_2$ & $3.89_3$ & $4.46_3$ & $10.54_4$ & $8.89_4$ & $13.53_4$ & $6.37_4$ & $10.10_4$ & $3.61_3$ & $1.95_4$ & $5.70_3$ & $19.21_4$ & $11.41_4$ & $12.92_2$ & $30.99_2$ & $\textbf{7.92}_1$ & $10.13_4$ & $9.01_4$ & $3.00_3$ \\
 FH\cite{felzenszwalb2004efficient}    & $1.89_2$ & $0.76_2$ & $7.55_3$ & $3.64_2$ & $4.15_2$ & $10.51_3$ & $7.37_3$ & $11.28_3$ & $5.36_3$ & $09.05_2$ & $3.15_2$ & $1.58_2$ & $5.39_2$ & $15.73_3$ & $11.14_2$ & $\textbf{12.70}_1$ & $\textbf{24.92}_1$ & $8.16_4$ & $09.51_3$  & $8.10_2$ & $2.37_2$ \\
 ERS\cite{liu2011entropy}   & $2.65_4$ & $1.45_4$ & $8.87_4$ & $3.94_4$ & $4.57_4$ & $10.00_2$ & $5.78_2$ & $09.14_2$ & $5.02_2$ & $09.77_3$ & $4.03_4$ & $1.82_3$ & $5.86_4$ & $14.81_2$ & $\textbf{10.78}_1$ & $14.97_4$ & $38.89_4$ & $8.08_2$ & $04.72_2$ & $8.69_3$ & $3.00_3$ \\
 SH                        & $2.22_3$ & $1.15_3$ & $\textbf{7.05}_1$ & $\textbf{3.50}_1$ & $\textbf{3.27}_1$ & $\textbf{08.12}_1$ & $\textbf{4.93}_1$ & $\textbf{04.80}_1$ & $\textbf{4.69}_1$ & $\textbf{08.77}_1$ & $\textbf{2.18}_1$ & $\textbf{1.17}_1$ & $\textbf{4.77}_1$ & $\textbf{12.61}_1$ & $11.20_3$ & $14.63_3$ & $38.72_3$ & $8.08_2$ & $\textbf{03.12}_1$ & $\textbf{7.63}_1$ & $\textbf{1.58}_1$ \\
\bottomrule
\end{tabular}}
\end{center}

\caption{\textbf{Stereo matching evaluation on 19 Middlebury datasets \cite{middlebury}}. Results for the method of \cite{yang2012non} with 4 tree structures: MST \cite{yang2012non}, FH \cite{felzenszwalb2004efficient}, ERS \cite{liu2011entropy} and proposed SH.
Percentages of the erroneous pixels in non-occlusion regions with threshold 1 are used to evaluate the aggregation accuracy of the structures.
The subscripts represent the relative rank of the methods on each data set.
SH produces the most accurate disparity map on 13 data sets.}
\label{tab:stereo}
\end{table*}

\begin{figure}\scriptsize
\begin{center}

%
%

\def\arraystretch{0.6}
\renewcommand{\tabcolsep}{0.5 pt}
\setlength{\fboxrule}{0.5pt}
\setlength{\fboxsep}{0pt}
\def\swidth{0.24\linewidth}
\begin{tabular}{cccc}

\fbox{\includegraphics[width=\swidth]{./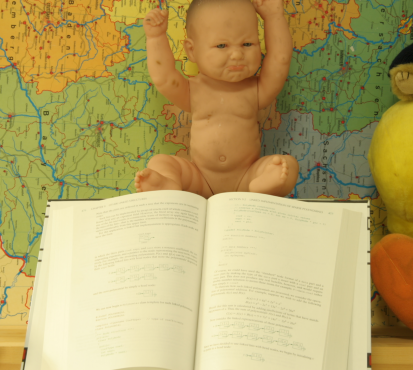}}    &
\fbox{\begin{overpic}[width=\swidth,unit=1mm]{./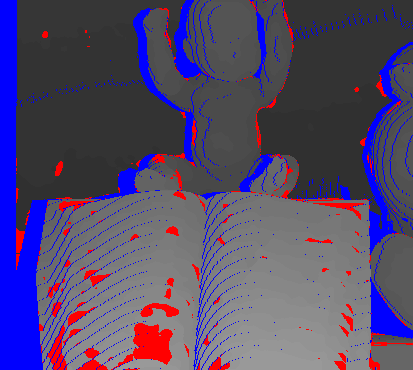}   \put(75,80){\color{white}{\textbf{ 4.80}}} \end{overpic}}  &
\fbox{\begin{overpic}[width=\swidth,unit=1mm]{./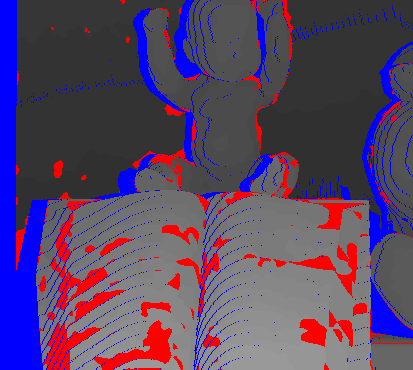}    \put(75,80){\color{white}{\textbf{ 9.14}}} \end{overpic}}  &
\fbox{\begin{overpic}[width=\swidth,unit=1mm]{./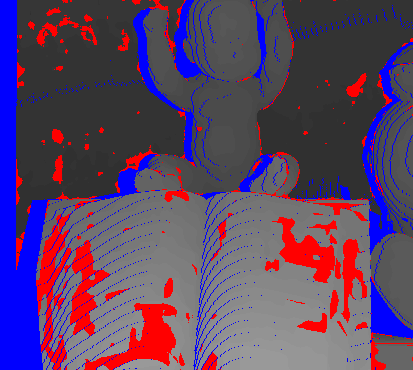}     \put(74,80){\color{white}{\textbf{11.28}}} \end{overpic}}  \\

\fbox{\includegraphics[width=\swidth]{./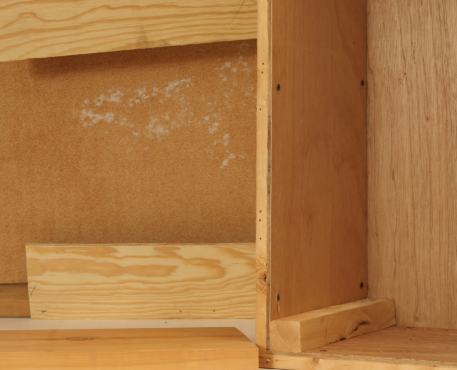}}    &
\fbox{\begin{overpic}[width=\swidth,unit=1mm]{./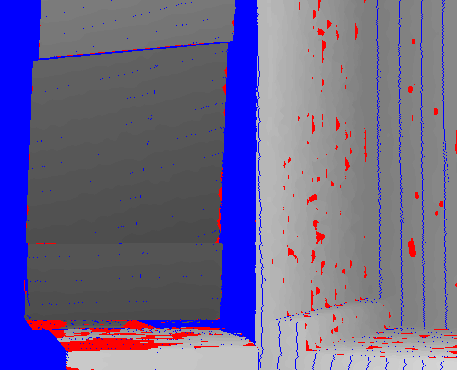} \put(75,72){\color{white}{\textbf{ 3.12}}} \end{overpic}}   &
\fbox{\begin{overpic}[width=\swidth,unit=1mm]{./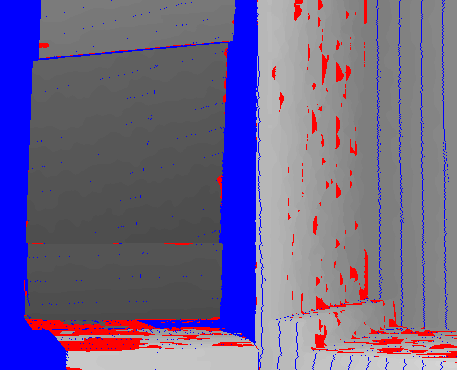}  \put(75,72){\color{white}{\textbf{ 4.72}}} \end{overpic}}   &
\fbox{\begin{overpic}[width=\swidth,unit=1mm]{./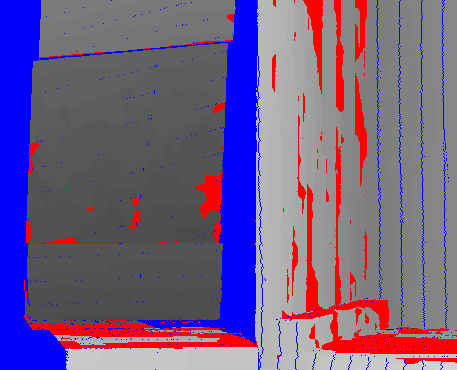}   \put(75,72){\color{white}{\textbf{ 9.51}}} \end{overpic}}   \\

\\ (a) left image & (b) SH & (c) ERS \cite{liu2011entropy} & (e) FH \cite{felzenszwalb2004efficient,mei2013segment} \\

\end{tabular}
\end{center}
\caption{\textbf{Stereo matching using different tree structures.} Results for the method of \cite{yang2012non} with different tree structures on Middlebury. Occlusion regions are marked in blue and erroneous pixels are marked in red. Numbers in the upper-right corner indicate percentages of bad pixels.}
\end{figure}


{\flushleft \textbf{Saliency Detection.}}
%
The goal of saliency detection is to tell whether a pixel belongs to the most salient object.
The method of \cite{qin2015saliency} introduces \emph{Cellular Automata} (CA) to intuitively detect the salient object. CA can be designed in a single-layer (SCA) or multi-layer (MCA) fashion.
It's shown in \cite{qin2015saliency} that MCA improves saliency detection accuracy significantly compared to SCA by fusing multiple saliency detection methods.
Here we demonstrate that improvement can also be achieved by fusing multi-scale segmentation.

We use 5 scales of superpixels range from 100 to 1000 for MCA. Results are compared on three challenging datasets: PASCAL-S \cite{li2014secrets}, ECSSD \cite{yan2013hierarchical} and DTU-OMERON \cite{yang2013saliency}. As shown in Table \ref{tab:saliency} and Figure \ref{fig:saliency}, multi-scale segmentations clearly and consistently improve saliency detection accuracy for all superpixel algorithms. SH shows striking advantages for this task as generating the most accurate saliency maps and reducing computational cost significantly.


{\flushleft \textbf{Stereo Matching.}}
To demonstrate the usefulness of tree structure provided by SH, we integrate it with the non-local cost aggregation method \cite{yang2012non} for stereo matching.
Different from traditional local stereo methods, \cite{yang2012non} performs cost aggregation over the entire image with a MST in a non-local manner.
The method is computationally very efficient, with a complexity comparable to uniform box filtering and also shows edge-preserving and non-local properties.

Following \cite{mei2013segment}, we quantitatively evaluate the aggregation accuracy with MST, FH, ERS and our SH on 19 Middlebury data sets.
All the methods use the same cost volume and do not employ any post-processing.
The disparity error rates in non-occlusion regions are used to evaluate. Table \ref{tab:stereo} shows the experimental results. The subscripts represent relative rank of the methods on each data set. As expected, all segmentation-based structures improve the basic MST. The performance of proposed SH is higher than the other tree structures. It obtains the lowest average error rate and the highest average ranking. SH achieves the most accurate results on 13 (out of 19) datasets.

\section{Conclusion}

This paper presents a simple yet effective hierarchical superpixel segmentation method that can be used in a wide range of computer vision tasks.
A great advantage of the proposed method is that it generates all scale of superpixels efficiently and accurately.
Future work includes speeding up the proposed method on GPU and applying it to point cloud and video sequence segmentation.

\ifCLASSOPTIONcaptionsoff
  \newpage
\fi

\bibliographystyle{IEEETran}
\bibliography{super}

\end{document}